\def\eqref#1{equation~\ref{#1}}
\def\1{\bm{1}}
\DeclareMathAlphabet{\mathsfit}{\encodingdefault}{\sfdefault}{m}{sl}
\SetMathAlphabet{\mathsfit}{bold}{\encodingdefault}{\sfdefault}{bx}{n}
\newcommand{\E}{\mathbb{E}}
\newcommand{\R}{\mathbb{R}}
\newcommand{\Var}{\mathrm{Var}}
\DeclareMathOperator{\Tr}{Tr}
\newtheorem{theorem}{Theorem}[section]
\theoremstyle{definition}
\theoremstyle{plain}
\title{Escaping Model Collapse via Synthetic Data Verification: \\ Near-term Improvements and Long-term Convergence\thanks{%
This project is supported by the AI2050 program at Schmidt Sciences (Grant G-24-66104) and Army Research Office Award W911NF-23-1-0030. 
We also thank Cong Ma from UChicago, Hongning Wang and Bo Li from Tsinghua, 
Pinyan Lu and Gavin Tang from Shanghai University of Finance and Economics for helpful discussions and suggestions.}}
\author{
  Bingji Yi\footnotemark[3]\hspace{0.5em}\footnotemark[2]
  \And
  Qiyuan Liu\footnotemark[4]\hspace{0.5em}\footnotemark[2]
  \And
  Yuwei Cheng\footnotemark[4]
  \And
  Haifeng Xu\footnotemark[5]
}
\begin{document}
\maketitle

\vspace{-2em}
{
\renewcommand{\thefootnote}{\fnsymbol{footnote}}
\footnotetext[3]{Independent Researcher. Work done while visiting UChicago CS. \texttt{yibingji@gmail.com}}
\footnotetext[4]{Department of Statistics, University of Chicago. \texttt{qiyuanliu@uchicago.edu}, \texttt{yuweicheng@uchicago.edu}}

\footnotetext[5]{Department of Computer Science, University of Chicago. \texttt{haifengxu@uchicago.edu}}
\footnotetext[2]{Equal contribution.}

\renewcommand{\thefootnote}{\arabic{footnote}}
}

\begin{abstract}

   Synthetic data has been increasingly used to train frontier generative models. However, recent studies raise key concerns that iteratively retraining a generative model on its self-generated synthetic data may keep deteriorating model performance, a phenomenon often coined \emph{model collapse}. In this paper, we investigate ways to modify the synthetic retraining process to avoid model collapse, and even possibly help reverse the trend from collapse to improvement. Our key finding is that by injecting information through an external synthetic data verifier, whether a human or  a better model, synthetic retraining will not cause model collapse.
   Specifically, we situate our theoretical analysis in the fundamental linear regression setting, showing that verifier-guided retraining can yield near-term improvements, but ultimately drives the parameter estimate to the verifier's “knowledge center” in the long run. Our theory further predicts that, unless the verifier is perfectly reliable, these early gains will plateau and may even reverse. Indeed, our experiments across linear regression, Variational Autoencoders (VAEs) trained on MNIST, and fining-tuning SmolLM2-135M on the XSUM task confirm these theoretical insights.

\end{abstract}
\vspace{-8mm} 
\begin{figure}[H]
    \centering
    \includegraphics[width=0.9\textwidth]{./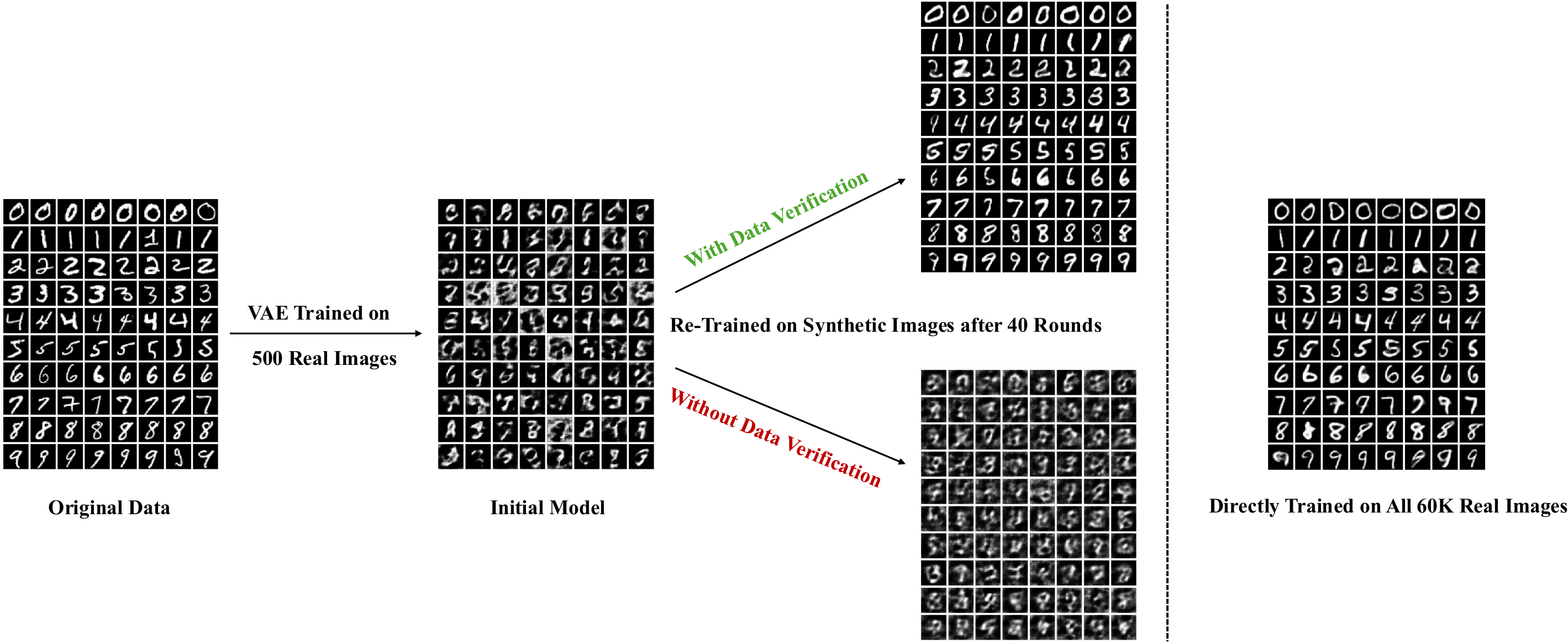} 
    \vspace{4mm}
 \caption{\label{fig:mnist-illustration}
\textbf{Iterative VAE Retraining on MNIST.}
\textbf{Left:} Original MNIST images (real data).
\textbf{Middle:} Samples from a VAE trained on 500 real images.
\textbf{Right:} Samples after 40 rounds of synthetic retraining.
The \emph{top branch} (green) uses verifier-filtered synthetic data, producing clearer and more realistic digits; 
the \emph{bottom branch} (red) retrains without verification, leading to severe degradation and mode collapse.
The final column shows a VAE trained on all 60K real images (upper bound on quality).
}

    \label{fig:samples}
\end{figure}
\vspace{-0.8em}

\section{Introduction}

The use of synthetic data has gained significant traction due to its ability to reduce data collection costs and enhance privacy protection, with applications in computer vision \citep{wood2021fake}, healthcare \citep{azizi2021can, santangelo2025good}, finance \citep{potluru2023synthetic} and recently large language models (LLMs) \citep{chen2024diversity}. A growing body of work has demonstrated that training with synthetic data can improve model performance in various applications such as image recognition \citep{hesynthetic, tremblay2018training}, image generation \citep{doersch2019sim2real,shrivastava2017learning,tian2023stablerep} and language generation \citep{gunasekar2023textbooks,guo2024deepseek,zelikman2022star}.   However, recent studies caution that recursively training models on synthetic data alone can lead to a degradation of quality, a phenomenon often coined \emph{model collapse} \citep{shumailov2024ai, dohmatob2024model, dohmatobstrong, dohmatob2024tale, alemohammad2023self, gerstgrasser2024model}. This contrast between  empirical success  and pessimistic research findings   gives rise to a natural question about what might have caused the discrepancy. 

In response to the above question, an important practical observation  is that synthetic data  are rarely used in raw form in the aforementioned applications. Instead, practitioners often apply filtering steps to remove low-quality synthetic samples before retraining. For example, in natural language generation, synthetic sentences are often   screened using grammar checkers or LLM-as-a-judge pipelines \citep{zheng2023judging,gu2024survey}; in computer vision, synthetic images can be filtered using pretrained discriminators  or human annotation \citep{hesynthetic}; in recommendation and preference learning, synthetic feedback is often validated against external heuristics or known user signals \citep{tu2025resofilter, iskander2024quality, lupidi2024source2synth, lampis2023bridging, zhang2024regurgitative}.
Common across all these approaches is the use of a knowledgeable ``discriminator'' (henceforth the \emph{verifier} ) -- whether a machine or human --   that evaluates and filters out low-quality candidate synthetic samples (i.e., those not passing the discriminator's screening).
This observation naturally  raises the following research question: 
\begin{quote} 
    \emph{Does  verifier-based filtering of synthetic data contribute to the observed empirical success of   model improvements, and does it prevent model collapse in the long run?} 
\end{quote} 

There has been a growing body of work studying the mechanisms of \emph{model collapse}, including theoretical analyzes that are often examined in the context of classic parameter estimation problems  \citep{dohmatob2024model,dohmatobstrong,gerstgrasser2024model,dey2024universality,xu2025probabilistic,sureshrate}. However, these works have all assumed the use of  synthetic data \emph{without} filtering. Only few recent works start to examine how filtered synthetic data  affects the performance of iterative retraining, but  under idealized assumptions such as access to a perfectly correct verifier  \citep{amin2025escaping}  or highly structured errors in synthetic data (i.e., i.i.d. noise added to binary labels  \citep{feng2024beyond}). A realistic and instructive framework for analyzing synthetic retraining of generative models remains still poorly understood. In this paper, we further push the boundary of this important agenda, and  examine iterative retraining under imperfect verifiers that filter out low-quality synthetic data based on their (possibly biased) knowledge. We refer to this process as    \emph{verifier-based synthetic retraining} for convenience. Specifically, we seek principled understandings about the empirically observed short-term successes of verifier-based synthetic retraining and analysis about its long-term convergence under iterative retraining.

\medskip
\noindent
\textbf{Our contributions.}     We start from  theoretical investigations and  analyze verifier-based synthetic retraining with \emph{verified} synthetic data on the foundational linear regression model---a canonical  setting that has become central to the study of model collapse (e.g., \citep{dohmatob2024model,dohmatobstrong,gerstgrasser2024model,zhusynthesize,garg2025preventing}).\footnote{The theoretical analysis of \citep{feng2024beyond} is also based on linear models, specifically, linear classifiers.} We then verify our theoretical insights through thorough empirical studies in real-world generative settings.  Our main contributions are summarized as follows:

\begin{itemize}
    \item \emph{Does verified synthetic data improve retraining and, if so, under what conditions?} 
     We show that it indeed can, provided the right conditions are met.
    Through a new form of bias-variance trade-off under data filtering,  we characterize the regimes in which verifier-based synthetic retraining   leads to strict model improvement, rather than  degradation, in   the short term (Theorem \ref{thm:linear:one_step}). The conditions we identify highlight the mixed effect of synthetic sample size, the verifier's bias  and  selectivity  during filtering, yielding practical insights regarding   \emph{when}  verification of synthetic data is beneficial.
    
    \item \emph{Can the retraining improvement in the short term be sustained in the long run?}  Our theoretical analysis shows that the answer is   \emph{No} -- unless the verifier has no bias.
    Formally, we show that verifier-based synthetic retraining  will converge to the verifier's knowledge center in the long term (Theorem \ref{thm:linear:long_term}).
    This result reveals how the verifier's quality affects  the asymptotic dynamics of iterative retraining.
    \textcolor{black}{Notably, while verifier selectivity influences short-term performance, it does not change the long-run converging point, though it does affect its convergence rate.}
    
    \item \emph{Do the insights from the above theoretical analysis apply to real generative models empirically?} Indeed, both simulations of linear regression settings   and  training Variational Autoencoders (VAEs) on the real-world MNIST data demonstrate that our theoretical insights align with observed training dynamics in these settings. For example, Figure \ref{fig:mnist-illustration} shows how the generative model of VAE   can start from a  poor model trained on a small number of 500 images and gradually improve from being retrained on filtered synthetic imaged generated by itself, until after 40 iterations of retraining generating sharp images that are visually comparable to a VAE trained directly on the entire MNIST dataset. More quantitative analysis of our empirical studies can be found in Section \ref{sec:experiment}.     
\end{itemize}

\subsection{Related Work}

\paragraph{Understanding and mitigating model collapse.}
Recent work shows that heavy reliance on synthetic data in iterative training can cause \emph{model collapse}—the degradation of performance when a model is repeatedly retrained on its own synthetic outputs (possibly mixed with real data).\footnote{There is no widely agreed-upon formal definition; see \citep{schaeffer2025position} for discussion.} Empirical evidence supports this phenomenon: \citet{shumailov2024ai} show that recursive training on unfiltered synthetic data induces distribution shift and mode collapse, while \citet{dohmatobstrong} find that even small synthetic proportions can harm performance. In linear settings, \citet{dohmatob2024model} analyze collapse mechanisms explicitly, and \citet{dohmatob2024tale} connect degradation to altered neural scaling laws.

\textcolor{black}{To mitigate collapse, prior work broadly explores three strategies.
First, accumulating data or gradually increasing the synthetic dataset size across iterations can suppress noise and bound errors \citep{gerstgrasser2024model, dey2024universality, xu2025probabilistic, kazdancollapse, barzilai2025models}.
Second, mixing synthetic data with real data stabilizes retraining \citep{bertrand2024stability, fu2024towards, futheoretical}, as performance progressively degrades without sufficient fresh real data \citep{alemohammad2023self}.
Recent studies have even derived optimal mixing ratios to maximize this stabilizing effect \citep{he2025golden, garg2025preventing}.
Finally, algorithmic interventions, such as the token re-sampling procedures proposed by \citet{zhusynthesize}, offer alternative pathways to avoid collapse.}

Unlike prior work that relies on unfiltered synthetic data, our framework incorporates an external verifier to remove low-quality samples. Such verifiers may be human annotators or stronger teacher models. Filtering is widely used in iterative retraining and has shown empirical success in preventing model degradation and even improving performance \citep{hesynthetic,tian2023stablerep,guo2024deepseek,zelikman2022star,zhang2024regurgitative, lampis2023bridging, haluptzoklanguage, patwa2024enhancing}. Motivated by this, we develop a principled understanding of when improvement is possible—namely, whether a generative model can leverage the verifier’s feedback, embedded in the selected synthetic subset, to achieve sustained gains.

\vspace{-2mm}
\paragraph{Filtering and selecting synthetic data.}
\textcolor{black}{While a rich line of empirical work demonstrates that these filtering strategies can improve model performance,
theoretical understanding about iterative retraining with filtered synthetic data remains largely unexplored, with only a few recent exceptions.}
\textcolor{black}{\citet{amin2025escaping} assume a strong, reliable quality function and focus on how an external labeler aids learning under this fixed filtering mechanism. 
\citet{feng2024beyond} study a classification problem and identify a sharp phase transition.
However, modeling synthetic data merely as noisy labels abstracts away the structural dependencies between features and labels inherent to true generative processes.
}
Finally, \citet{ferbach2024self,weiself} considers a conceptually similar problem of learning  a discrete preference distribution from human feedback by using humans' preferred choices as a filtering strategy.
\textcolor{black}{In their population-level analysis, curating synthetic data via an external reward function forces the model distribution to converge to the highest-reward level set, maximizing expected reward but ultimately collapsing in diversity.}

Similar to many of the aforementioned studies above, our theoretical analysis also focuses on linear models \citep{feng2024beyond,dohmatobstrong,gerstgrasser2024model}.
However, our model allows inaccuracy of the verifier in terms of both bias and variance.
Errors in the synthetic data primarily stem from the inaccuracy of the generative model itself \textcolor{black}{rather than exogenous noise}.
We show that model's short-term performance varies smoothly with the verifier's bias, selectivity, and  size of synthetic data, rather than exhibiting a sharp phrase transition from complete failure to perfect accuracy as in \citet{feng2024beyond}.
In the long run, the model's performance converges to the verifier's knowledge center whereas verifier's selectivity only affects convergence speed.
Our results bridge short-term and long-term perspectives of iterative retraining, illustrating  how varied verifier qualities give rise to distinct performances of iterative retraining.

\paragraph{Comparison with reward maximization frameworks.}
\label{app:reward_comparison}
While sharing the conceptual similarity of evaluating generated data via an external feedback mechanism, our modeling approach substantially differs from reward maximization frameworks in various aspects, including preference matching \citep{ferbach2024self,weiself} and recent Reinforcement Learning with Verified Rewards (RLVR)\citep{guo2024deepseek,yu2025dapo}. 
Theoretically, these methods frame the problem as policy optimization, where the objective is to maximize a provided reward signal. While highly effective for alignment, the definition of a ``good model'' is tied to the specific reward formulation, which may not correspond to recovering the true data-generating distribution.
Practically, reward optimization relies on assigning scalar rewards or pairwise comparison signals \citep{ouyang2022training}, which are often difficult and noisy to define. For instance, evaluating image quality or open-ended language generation with a single numerical reward is inherently subjective. While recent methods like RLVR avoid this issue by restricting themselves to domains with clearly verifiable rewards \citep{guo2025deepseek,wu2025invisible,yu2025dapo}, many important training settings lack such reliable reward functions.
In contrast, we study the widely used ``generate-verify-retrain'' paradigm, which utilizes binary accept/reject filtering. 
Theoretically, our framework defines a ``good model'' at the parameter level, explicitly modeling the relationship between the verifier's filtering rule and the ground truth. 
By formalizing this link, we can directly analyze model performance during iterative retraining, even for an imperfect or biased verifier.
Practically, this filtering mechanism is less noisy, highly stable, and serves as a core scalable primitive in modern LLM pipelines like DeepSeek-Coder \citep{guo2025deepseek}.

\section{Modeling Verifier-Based Synthetic Retraining: the Linear Regression Case}\label{sec:linear}

In this section, we formalize our model of iterative retraining with verified synthetic data, coined \emph{verifier-based synthetic retraining} for convenience. Following recent works in this space \citep{dohmatob2024model,gerstgrasser2024model,garg2025preventing,zhusynthesize}, we focus on the foundational linear regression setting where the objective is to estimate a high-dimensional coefficient vector $\theta^\star$ in the following linear model
\[
y = x^{\top} \theta^\star + \xi,
\]
where $\xi \sim \mathcal{N}(0, \sigma^2)$, $x \in \mathbb{R}^p$, and $\theta^\star \in \mathbb{R}^p$ is the unknown parameter of interest. We use the standard Mean Squared Error (MSE), i.e., $\text{MSE}(\hat{\theta}) = \E_{\xi} \|\hat{\theta} - \theta^\star\|^2$, to evaluate estimators.

\paragraph{Modeling the verifier and data filtering rule.}  Suppose we have access to a verifier that possesses prior knowledge of $\theta^\star$, modeled by a knowledge set. Specifically,  the verifier's knowledge is described by a spherical ball:
\[
B_r(\theta_c) := \bigl\{\, \theta \in \mathbb{R}^p : \|\theta - \theta_c\| \leq r \,\bigr\},
\]
with fixed center $\theta_c$ and radius $r$.  We assume this knowledge set indeed contains the true parameter, i.e.,  $\theta^\star \in B_r(\theta_c)$, but  the true parameter $\theta^\star$ is unknown.  The verifier does not reveal $\theta_c$ or $r$ directly (see modeling motivations below). Instead, it  only provides binary feedback indicating whether a given (real or synthetic) data point $(x_i, y_i)$ is consistent with the knowledge $\theta^\star \in B_r(\theta_c)$ or not. Specifically, the verifier outputs \emph{Yes} if
\begin{align}
|y_i - x_i^{\top} \theta_c| \leq r \|x_i\| + \sigma_c, \label{equ:linear:verify_condition}
\end{align}
and \emph{No} otherwise. Here $\sigma_c$ is a  constant related to the verifier's capability.  This \emph{filtering rule} is motivated by the following bound on expected errors: 
$
\E\bigl[\,|y_i - x_i^{\top}\theta_c|\,\bigr] = \E\bigl[\,|x_i^{\top}(\theta^\star - \theta_c) + \xi_i|\,\bigr] \leq r\|x_i\| + \E|\xi_i| = r\|x_i\| + \sqrt{\tfrac{2}{\pi}}\sigma.
$
Since the true $\sigma$ might be unknown in practice, $\sigma_c$ serves as an estimate of the true $\sigma$. 

We refer to $\Delta = \|\theta^\star - \theta_c\|$ as the \emph{bias} of the verifier, whereas  $r$ captures the \emph{selectivity} of the verifier --  the smaller $r$ is, less likely the verifier  accepts a data point $(x_i, y_i)$.  The verifier only needs to  provide Yes/No answers based on the above selection rule in   \eqref{equ:linear:verify_condition}, but does not  needs    to know the parameter  $\theta_c, r$ of the knowledge set.   The motivation of this modeling primarily comes from practice, as explained below.

\paragraph{Motivation of binary feedback from verifiers.}    
We adopt the  binary feedback  from verifiers mainly for practical reasons.   
In practice, eliciting simple yes/no feedback is far less noisy and more cost-effective than asking verifiers to directly specify $\theta_c$ or $r$. Indeed, in real applications verifiers may not even know these quantities explicitly, which would correspond to model parameters if the verifier is a stronger teacher model or how the human reasons if the verifier is a human.  This model choice   is also aligned with the  widely adopted comparison-based feedback in reinforcement learning from human feedback (RLHF) \citep{ouyang2022training}. Such binary feedback has become a standard approach in preference alignment for large language models, where LLM raters and human evaluators provide pairwise or accept/reject judgments that effectively guide learning at scale \citep{wettigqurating}. Although simple, our theory and empirical evaluations both show that this \emph{single bit} of information for each sample can successfully be injected into the retraining process to improve models.

\paragraph{Synthetic Retraining with Verifier-based Filtering}

We begin with an initial set of real data $(X^0, Y^0)$, where $X^0 \in \mathbb{R}^{n_0 \times p}$ and $Y^0 \in \mathbb{R}^{n_0}$.  
The initial estimator $\hat{\theta}^0$ is obtained via Ordinary Least Squares (OLS) \footnote{For ease of presentation, we assume Rank$(X^0)=p$. If Rank$(X^0)<p$, all our results are equally applicable by working in the subspace of $X^0$. }
$
\hat{\theta}^0 = ({X^{0}}^\top X^0)^{-1} {X^0}^\top Y^0. \label{equ:linear:theta0}
$
We then proceed with iterative synthetic retraining via the \emph{generate--verify--retrain} procedure outlined in Figure~\ref{fig:pipeline}, the rigorous retraining Scheme~\ref{scheme:linear_retraining} and Algorithm~\ref{alg:linear_retraining} are provided in Appendix~\ref{app:linear}.

\begin{figure}[h]
\centering
\begin{tikzpicture}[
  node distance=14mm, >=Stealth,
  midbox/.style={rectangle, rounded corners, draw, align=center,
                 minimum height=8mm, text width=20mm, inner sep=2mm, font=\footnotesize},
  endbox/.style={rectangle, rounded corners, draw, align=center,
                 minimum height=8mm, text width=10mm, inner sep=2mm, font=\footnotesize}
]
\footnotesize
\node[endbox] (theta) {$\hat{\theta}^k$};
\node[midbox, right=of theta] (xy) {$(X^{k+1},\, Y^{k+1})$};
\node[midbox, right=of xy] (xyv) {$({X^{k+1}}',\, {Y^{k+1}}')$};
\node[endbox, right=of xyv] (theta2) {$\hat{\theta}^{k+1}$};

\draw[->] (theta) -- node[midway, above] {Generate} (xy);
\draw[->] (xy) -- node[midway, above] {Verify} (xyv);
\draw[->] (xyv) -- node[midway, above] {Retrain} (theta2);
\end{tikzpicture}
\caption{Generate-Verify-Retrain pipeline.}
\label{fig:pipeline}
\end{figure}

Since learning proceeds through the conditional $Y^k \mid X^k$, synthetic retraining requires specifying the covariate design $X^k$; labels $Y^k$ are then generated conditionally via the model under verifier constraints.
In principle, one could construct $X^k$ arbitrarily; however, for mathematical clarity, below we describe a targeted though arguably natural design. 
In particular, we choose to align the synthetic covariates with a fixed orthonormal set $\{v_1,\dots,v_p\}$ and construct $X^k$ in a block-structured form by repeating each $v_j^\top$ as rows:
\label{equ:linear:block_design}
\begin{align}
X^k = (\,
\underbrace{v_1, \ldots}_{\text{copies of } v_1},\;
\underbrace{v_2, \ldots}_{\text{copies of } v_2},\;
\ldots,\;
\underbrace{v_p, \ldots}_{\text{copies of } v_p}
\,)^\top.
\end{align}
After verifier filtering, each orthogonal direction $v_j$ retains exactly $n_k$ samples with 
$
n_0 \;\leq\; pn_1 \;\leq\; pn_2 \;\leq\; \cdots.
$

Notably, the estimation of parameter $\hat{\theta}^{k+1}$ using only synthetic data from the model with $\hat{\theta}^{k}$, though with filtering, leads to a Markovian transition   $\hat{\theta}^k \mapsto \hat{\theta}^{k+1}$. The above block design essentially helps ``diagonalize''  the transition operator $\hat{\theta}^k \mapsto \hat{\theta}^{k+1}$.  
The conceptual benefit of this covariance design choice is that we remove the rotational variability that arbitrary designs would introduce across iterations and decouple the dynamics along orthogonal directions.
In practice,  this design mirrors curating data along approximately orthogonal latent spaces or topics (e.g., topical axes like politics, sports, mathematics).
However, the choice of covariant $X^k$ is not unique:   alternatives (e.g., canonical basis, isotropic random directions) can yield similar qualitative conclusions, with potentially different constants or rates.
We expect our theoretical insights to generalize to any reasonable design of the covariant $X^k$, though the rigorous proofs may be less tractable for some designs.

\section{On the Near-Term Improvement under Synthetic Retraining} 

This section investigates the verifier's role in synthetic retraining: \emph{does it help, when does it help, and why does it help?} 
We focus on one round and show that verifier-guided retraining can improve performance under mild assumptions.
The underlying mechanism is a verifier-induced bias-variance trade-off: filtering synthetic data \emph{reduces variance} but may \emph{introduce bias}. 

\subsection{Source of Improvement:  Bias--Variance Trade-off with Verifier's Bias }  

To address the question of \emph{when and why} verifier-based synthetic retraining improves estimation, we analyze the mean squared error (MSE) of the one-step estimator $\hat{\theta}^1$ in estimating the true regression coefficient $\theta^\star$. The MSE admits the following decomposition:

\begin{equation}\label{equ:regression-risk-decomp}
    \mathbb{E}\|\hat{\theta}^1 - \theta^\star\|^2
    = \mathbb{E}_{\hat{\theta}^0}\!\left[\, \Tr\!\big( \Var(\hat{\theta}^1 \mid \hat{\theta}^0) \big) \,\right]
    + \mathbb{E}_{\hat{\theta}^0}\!\left\|\, 
        \mathbb{E}\!\left[\hat{\theta}^1 \mid \hat{\theta}^0\right] - \theta^\star 
    \right\|^2.
\end{equation}

The first term in \eqref{equ:regression-risk-decomp} is the \textbf{synthetic variance}: it captures additional estimation noise from the randomness in synthetic data generation. This variance decreases at rate $1/n_1$ with the synthetic sample size $n_1$, but is unaffected by the real sample size $n_0$. Hence, with abundant synthetic data, this term becomes negligible.

The second term is the \textbf{verification error}, which measures the deviation of the conditional mean estimator $\mathbb{E}(\hat{\theta}^1 \mid \hat{\theta}^0)$ from $\theta^\star$. This error depends both on the accuracy of the verifier (i.e., its potential bias) and the quality of the initial estimator $\hat{\theta}^0$, which improves with larger $n_0$.

To further disentangle the verification error, we decompose it as
\begin{equation}
    \mathbb{E}_{\hat{\theta}^0}\!\left\|\, 
        \mathbb{E}\!\left[\hat{\theta}^1 \mid \hat{\theta}^0\right] - \theta^\star 
    \right\|^2
    = \Tr\!\left( \Var\!\left( \mathbb{E}\!\left[\hat{\theta}^1 \mid \hat{\theta}^0\right] \right) \right)
    + \|\mathbb{E}[\hat{\theta}^{\,1}] - \theta^\star\|^2. 
    \label{equ:regression-risk-decomp2}
\end{equation}
Here, the first term is the \textbf{verification variance}, reflecting variance reduction achieved by discarding inconsistent synthetic samples, while the second is the \textbf{verification bias}, capturing systematic deviation introduced by verifier bias.

Putting these together, the full decomposition is
\begin{equation}\label{equ:regression-risk-decomp3}
    \mathbb{E}\|\hat{\theta}^{\,1}-\theta^\star\|^2
    = \underbrace{\mathbb{E}_{\hat{\theta}^0}\!\left[\, \Tr\!\big( \Var(\hat{\theta}^1 \mid \hat{\theta}^0) \big) \,\right]}_{\text{Synthetic Variance}} 
    + \underbrace{\Tr\!\left( \Var\!\left( \mathbb{E}\!\left[\hat{\theta}^1 \mid \hat{\theta}^0\right] \right) \right)}_{\text{Verification Variance}} 
    + \underbrace{\|\mathbb{E}[\hat{\theta}^{\,1}] - \theta^\star\|^2}_{\text{Verification Bias}}.
\end{equation}

This decomposition highlights the central trade-off: verifier filtering \emph{reduces variance} but may \emph{introduce bias}. Verified synthetic data leads to improvement precisely when the variance reduction outweighs the bias introduced. In particular, when the verifier is sufficiently accurate and the synthetic sample size $n_1$ is large, the MSE of $\hat{\theta}^1$ can be strictly smaller than that of the real-data estimator $\hat{\theta}^0$. 

\subsection{Characterizing Improvement in One-Round Retraining} 

\textcolor{black}{
The following theorem rigorously quantifies this trade-off for the linear regression model, demonstrating exactly \emph{when} the one-step estimator $\hat{\theta}^1$ improves or degrades upon the initial baseline.
By characterizing the MSE of $\hat{\theta}^1$, it reveals how synthetic sample size, verifier bias, and verifier selectivity determine the final outcome.
}

\begin{theorem}\label{thm:linear:one_step}
Let $\{\mu_j\}_{j=1}^p$ denote the singular values of $X^0$, 
and assume each of them satisfies $\mu_j = \Omega(\sqrt{n_0})$.\footnote{That is, each dimension is well-represented in the original data. This holds easily when, e.g., the feature data is drawn i.i.d. from a full-rank distribution.}
Then there exist constants $m_{1,j},m_{3,j} \in \R$ and $m_{2,j}\in (0,1)$ for $j=1, \ldots, p$,  as well as a constant $L>0$ such that:
\begin{equation}
    \frac{1}{\sigma^2}\text{MSE}(\hat{\theta}^1) = \sum_{j=1}^{p} \bigg( \underbrace{\frac{m_{2,j}}{n_1}}_{\text{\scriptsize Synthetic Variance}} + \underbrace{m^2_{1,j}+\frac{m_{1,j}m_{3,j}+m^2_{2,j}}{\mu^2_j}}_{\text{\scriptsize Verification Error}} \bigg) + \mathcal{O}\left( n^{-4/3}_0 \right)\label{equ:linear:mse_one_step}
\end{equation}
holds with probability at least \( 1 - p\exp\left(-Ln_0^{1/3}\right) \), where \( n_1 \) denotes the post-verification sample size.
 
\end{theorem}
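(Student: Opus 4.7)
The plan is to exploit the block design of $X^1$ together with the choice $\{v_j\}$ as the right singular vectors of $X^0$ to decouple the analysis into $p$ independent one-dimensional problems. Since $\{v_j\}$ is orthonormal and each $v_j$ appears as $n_1$ rows of $X^1$ after verification, one has $X^{1\top}X^1 = n_1 I$ on $\mathrm{span}\{v_j\}$, so
\[
v_j^{\top}(\hat{\theta}^{1}-\theta^{\star}) \;=\; \delta_j^0 \;+\; \bar{\xi}_j,\qquad \delta_j^0 := v_j^{\top}(\hat{\theta}^{0}-\theta^{\star}),
\]
and $\bar{\xi}_j$ is the sample mean of $n_1$ i.i.d.\ draws from a $\mathcal{N}(0,\sigma^2)$ truncated to the verifier's acceptance interval $[-b-a_j,\,b-a_j]$, where $b=r+\sigma_c$ and $a_j = c_j + \delta_j^0$ with $c_j := v_j^{\top}(\theta^{\star}-\theta_c)$. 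Since $\{v_j\}$ diagonalizes $X^{0\top}X^0$, the $\delta_j^0$'s are independent with $\delta_j^0 \sim \mathcal{N}(0,\sigma^{2}/\mu_j^{2})$, and $\|\hat{\theta}^{1}-\theta^{\star}\|^{2}=\sum_{j}(\delta_j^0+\bar{\xi}_j)^{2}$.

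Next, I will perform a per-direction conditional calculation. Standard truncated-Gaussian formulas give $\E[\bar{\xi}_j\mid\hat{\theta}^{0}]=g(a_j)$ and $\Var[\bar{\xi}_j\mid\hat{\theta}^{0}]=V(a_j)/n_1$ for smooth functions $g,V$ of $a_j$ depending only on $b$ and $\sigma$. The crucial algebraic ingredient is the exponential-family mean--variance identity $g'(a)=V(a)/\sigma^{2}-1$, i.e.\ $1+g'(a)=V(a)/\sigma^{2}$, which I would derive by viewing the truncated normal as a natural exponential family in the parameter $a/\sigma^{2}$. Taylor-expanding $g$ and $V$ around $c_j$ in powers of $\delta_j^0$ and substituting into $\E_\xi[(\delta_j^0+\bar{\xi}_j)^{2}\mid\hat{\theta}^{0}]=(\delta_j^0+g(a_j))^{2}+V(a_j)/n_1$, the second-order coefficient collapses (thanks to the identity) to $(1+g'(c_j))^{2}+g(c_j)g''(c_j)=(V(c_j)/\sigma^{2})^{2}+g(c_j)g''(c_j)$.

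Taking the outer expectation over $\hat{\theta}^{0}$, all linear-in-$\delta_j^0$ terms vanish by symmetry, $\E[(\delta_j^0)^{2}]=\sigma^{2}/\mu_j^{2}$, and $\E[(\delta_j^0)^{3}]=0$. Identifying
\[
m_{1,j}=g(c_j)/\sigma,\qquad m_{2,j}=V(c_j)/\sigma^{2}\in(0,1),\qquad m_{3,j}=\sigma\,g''(c_j),
\]
produces exactly the main expression $\sum_{j}\bigl[m_{1,j}^{2}+m_{2,j}/n_1+(m_{2,j}^{2}+m_{1,j}m_{3,j})/\mu_j^{2}\bigr]$. The membership $m_{2,j}\in(0,1)$ follows because truncation strictly shrinks the Gaussian variance, and the fact that $m_{1,j}, m_{2,j}, m_{3,j}$ depend only on $(r,X^0,\theta^{\star},\theta_c)$ is built into the construction.

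The remaining work is to control the Taylor remainder at the claimed rate. I would introduce the good event $\mathcal{E}=\bigl\{|\delta_j^0|\le C n_0^{1/6}\sigma/\mu_j\text{ for all }j\bigr\}$; a union of Gaussian tail bounds gives $\Pr(\mathcal{E})\ge 1-p\exp(-Ln_0^{1/3})$, matching the probability statement. On $\mathcal{E}$, combined with $\mu_j=\omega(\sqrt{n_0})$, all arguments $a_j=c_j+\delta_j^0$ stay in a shrinking neighborhood of $c_j$ where derivatives of $g$ and $V$ are uniformly bounded, so the contributions from $\E[(\delta_j^0)^{4}]=O(1/\mu_j^{4})$ to the squared-mean term and from $\E[V'(a_j)(\delta_j^0)^{2}]/n_1=O(1/(n_1\mu_j^{2}))$ to the variance term drive the error; summed over the $p$ directions and combined with the negligible contribution from $\mathcal{E}^{c}$ (bounded via Cauchy--Schwarz against the Gaussian fourth moment of $\delta_j^0$), these yield the claimed $K\bigl(1/(n_1 n_0^{1/3})+1/n_0^{3/2}\bigr)$ bound. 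The main obstacle is this remainder bookkeeping: distinguishing odd Gaussian moments (which vanish) from even ones, uniformly bounding the third derivatives of $g$ and $V$ on the $\mathcal{E}$-neighborhood of $c_j$, and showing that the collapse $(1+g')^{2}\mapsto m_{2,j}^{2}$ via the exponential-family identity really produces the clean form stated, rather than a messy combination of $g', g''$ that would not match the target.
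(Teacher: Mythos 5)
Your proposal is correct and follows essentially the same route as the paper. The paper first proves a one-dimensional Gaussian warm-up lemma (its Theorem~A.1), obtaining the conditional mean and variance of the truncated-normal sample average, Taylor-expanding the conditional-mean function around the nominal bias, and controlling the remainder on a high-probability event; it then reduces each of the $p$ singular directions of the linear model to that 1-d lemma exactly as you do, via the block design and the fact that $v_j^\top\hat\theta^0 \sim \mathcal{N}(v_j^\top\theta^\star, \sigma^2/\mu_j^2)$ are independent across $j$.

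The one place you are genuinely more transparent than the paper is the algebraic collapse of the quadratic coefficient. The paper asserts the Taylor expansion $m_1(\alpha-\epsilon,\beta-\epsilon) = m_1(\alpha,\beta) - [1 - m_2(\alpha,\beta)]\epsilon + \tfrac12 m_3(\alpha,\beta)\epsilon^2 + \cdots$ by citing the truncated-normal derivative formulas directly, and it is only in the proof of the long-term theorem that the identity $T' = v$ is stated as such. You instead derive $1 + g'(a) = V(a)/\sigma^2$ from the natural exponential-family mean--variance relation, which makes it immediate why $(1+g')^2$ becomes $m_{2}^2$ and why $g''$ delivers the third central moment $m_3$ (via the next-order identity $V'(a) = \mu_3(a)/\sigma^2$). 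That is a cleaner way to see the same fact. Your remainder bookkeeping also ends up a bit sharper than what the theorem states---you organize the error as $O(1/\mu_j^4)$ and $O(1/(n_1\mu_j^2))$, which under $\mu_j = \omega(\sqrt{n_0})$ dominate the paper's quoted $O(1/n_0^{3/2})$ and $O(1/(n_1 n_0^{1/3}))$; the paper's bound is a looser upper bound obtained by stopping the Taylor expansion at third order and taking absolute values of odd moments, but both give the stated inequality. The only nitpick: your expression ``$\mathbb{E}[V'(a_j)(\delta_j^0)^2]/n_1$'' should involve the second-order Taylor term of $V$ around $c_j$ (the first-order term is killed by $\mathbb{E}\delta_j^0 = 0$), but the $O(1/(n_1\mu_j^2))$ rate you assign to it is correct.
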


\textcolor{black}{While the explicit forms of the constants are deferred to Appendix~\ref{app:linear}, their roles are highly intuitive: $m_{1,j}$ and $m_{3,j}$ capture the directional bias between the verifier's knowledge center $\theta_c$ and the ground truth $\theta^*$ along the $j$-th singular direction (vanishing if $\theta_c = \theta^*$), while $m_{2,j} < 1$  quantifies the variance reduction along that direction.
Theorem~\ref{thm:linear:one_step} mathematically guarantees when verifier-guided retraining improves the model.
Since the scaled baseline error is $\frac{1}{\sigma^2}\text{MSE}(\hat{\theta}^0) = \sum_{j=1}^p \mu_j^{-2}$, we can directly compare it to Equation~\ref{equ:linear:mse_one_step}.
When the verifier is highly accurate ($m_{1,j}, m_{3,j} \approx 0$), the verification error term becomes dominated by $\sum_{j=1}^p m_{2,j}^2 / \mu_j^2$. 
Because $m_{2,j} < 1$, this verification error is strictly smaller than the real-data error.
Thus, whenever the verified synthetic sample size $n_1$ is sufficiently large to drive the synthetic variance down, $\text{MSE}(\hat{\theta}^1)$ strictly improves upon the baseline.
}

This result highlights why verifier-based retraining is practically useful: in modern machine learning systems where real data collection is costly but generative models or simulators are available, a moderately accurate verifier can filter synthetic samples to effectively amplify limited real-world evidence and substantially reduce estimation error.
\textcolor{black}{Conceptually, this offers a sharp departure from classical model collapse literature, which typically models iterative synthetic data purely as a variance-inflating noise source \citep{shumailov2024ai,alemohammad2023self,dohmatob2024model}. Here, we prove that \emph{verification transforms synthetic data into a variance-reducing resource}, provided the verifier's bias is sufficiently controlled.}
As we will demonstrate empirically in Section~\ref{sec:experiment}, this mechanism is not confined to the linear model; it manifests clearly in complex models such as VAEs and LLMs.

\section{Iterative Retraining  Converges to the Verifier's Knowledge Center}

\textcolor{black}{
Having established that a single round of verifier-based retraining can improve estimation through a bias--variance trade-off, a natural question arises:
\emph{can this improvement be sustained over multiple rounds, and what is the eventual outcome?}
To contextualize our long-term analysis within the broader literature on model collapse, we first formalize these widely discussed empirical phenomena within our linear regression setting.
Specifically, we define \textbf{Model Degradation/Collapse} as $\limsup_{k \to \infty} \text{MSE}(\hat{\theta}^k)> \text{MSE}(\hat{\theta}^0)$,
and \textbf{Model Improvement} as $\limsup_{k \to \infty} \text{MSE}(\hat{\theta}^k) < \text{MSE}(\hat{\theta}^0)$.
}

Our key finding is that both behaviors can occur in long-term iterative retraining. 
The outcome depends critically on three factors: the growth rate of synthetic data, the verifier's bias, and the verifier's capability (i.e., its ability to reduce variance). 
Over time, iterative retraining injects increasingly more verifier knowledge into the estimator, while the contribution from the original data gradually decays. 
As a result, the verifier and the generative model family eventually dominate the limit behavior, driving the estimator $\hat{\theta}^k$ toward a fixed point, which corresponds to the verifier's knowledge center $\theta_c$. 

This dynamic gives rise to three distinct phases of long-term behavior: \textbf{(1) Unbiased verifier:} If the verifier is unbiased (i.e., $\theta_c= \theta^\star$), iterative retraining yields continuous improvement and the estimator converges to the true parameter.
\textbf{(2) Mildly biased verifier:} With small bias, iterative retraining can improve performance in the short term by reducing variance, but performance eventually plateaus or deteriorates as verifier bias accumulates.
\textbf{(3) Strongly biased verifier:} With large bias, iterative retraining leads to degradation and may even cause collapse in the limit.
\textcolor{black}{Among these, the mildly biased case is the most practically relevant.
It highlights a cautionary message:
while synthetic retraining can initially boost accuracy, a perfectly unbiased verifier is unrealistic; consequently, this inherent bias will ultimately prevent sustained improvement.}

Formally, the following theorem characterizes the long-term behavior of the estimator $\hat{\theta}^k$ in linear regression under   verifier-based synthetic retraining.

\begin{theorem}\label{thm:linear:long_term}  
There exist synthetic retraining processes (e.g., Algorithm~\ref{alg:linear_retraining}) and a constant $0<\rho < 1$ such that:
\vspace{-3mm}
\begin{align}
\mathbb{E} \|\hat{\theta}^k-\theta_c\|^2 \leq \rho^{2k} \mathbb{E}\|\hat{\theta}^0-\theta_c\|^2 + p\sigma^2\sum_{j=0}^{k-1}\frac{\rho^{2(k-j)-1}}{n_j}. \label{equ:linear:mse_long_term}
\end{align}

where $n_1 \leq n_2 \leq \cdots \leq n_k \cdots$ denote the number of verified synthetic samples per direction at each iteration.
In particular, if $\lim_{k \to \infty} n_k = \infty$, then $\lim_{k \to \infty} \mathbb{E}\|\hat{\theta}^k - \theta_c\|^2 = 0$.
\end{theorem}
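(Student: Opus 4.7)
The plan is to exploit the block-orthogonal covariate design of Algorithm~\ref{alg:linear_retraining} to diagonalize the Markov transition $\hat\theta^k\mapsto\hat\theta^{k+1}$ into $p$ independent one-dimensional truncated-Gaussian updates, establish a uniform geometric contraction toward $\theta_c$ along each direction, and then sum and iterate a one-step recursion. Concretely, since the rows of $X^{k+1}$ consist of $v_1^\top,\ldots,v_p^\top$ each repeated the same number of times, $(X^{k+1})^\top X^{k+1}$ restricted to the verified subsample is proportional to the identity, so the OLS estimator decouples as $\hat\theta^{k+1}=\sum_{j=1}^p v_j\,\bar y_j^{(k+1)}$, where $\bar y_j^{(k+1)}$ averages the $n_{k+1}$ retained responses along $v_j$. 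Writing $\eta_j^k:=v_j^\top\hat\theta^k$ and $\eta_j^c:=v_j^\top\theta_c$, it suffices to analyze the scalar update $\eta_j^k\mapsto\bar y_j^{(k+1)}$ in each direction.

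Conditional on $\hat\theta^k$, each retained sample in direction $v_j$ is a draw from $\mathcal{N}(\eta_j^k,\sigma^2)$ restricted to $[\eta_j^c-a,\eta_j^c+a]$ with $a=r+\sigma_c$ (using $\|v_j\|=1$). Setting $\delta:=\eta_j^k-\eta_j^c$, the standard truncated-normal mean formula gives $\mathbb{E}[\bar y_j^{(k+1)}-\eta_j^c\mid\hat\theta^k]=\kappa(\delta)$ for an odd function $\kappa$ with $\kappa(0)=0$. I would establish a uniform contraction $|\kappa(\delta)|\le\rho\,|\delta|$ with $\rho\in(0,1)$ in three parts: (a) a symmetry argument on the truncated Gaussian density shows the pointwise strict contraction $\mathrm{sign}(\kappa(\delta))=\mathrm{sign}(\delta)$ and $|\kappa(\delta)|<|\delta|$ for $\delta\ne 0$; (b) direct differentiation gives $\kappa'(0)=1-\tfrac{2\tilde a\phi(\tilde a)}{2\Phi(\tilde a)-1}$, which lies in $(0,1)$ for any $\tilde a:=a/\sigma\in(0,\infty)$ (positivity follows because $g(\tilde a):=2\Phi(\tilde a)-1-2\tilde a\phi(\tilde a)$ satisfies $g(0)=0$ and $g'(\tilde a)=2\tilde a^2\phi(\tilde a)>0$); (c) the trivial bound $\kappa(\delta)\in(-a,a)$ forces $|\kappa(\delta)|/|\delta|\to 0$ as $|\delta|\to\infty$. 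Extending $\kappa(\delta)/\delta$ continuously by $\kappa'(0)$ at the origin, parts (a)--(c) give a continuous function on $\mathbb{R}$ that is strictly less than $1$ pointwise and vanishes at infinity; a compactness argument then yields a uniform $\rho=\sup_\delta|\kappa(\delta)|/|\delta|<1$.

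Since truncation never increases variance, $\mathrm{Var}[\bar y_j^{(k+1)}\mid\hat\theta^k]\le\sigma^2/n_{k+1}$, and by the law of total variance, summing over the $p$ orthogonal directions yields the one-step recursion
\begin{equation*}
\mathbb{E}\|\hat\theta^{k+1}-\theta_c\|^2 \;\le\; \rho^2\,\mathbb{E}\|\hat\theta^k-\theta_c\|^2 + \frac{p\sigma^2}{n_{k+1}}.
\end{equation*}
Unrolling this inequality produces the geometric-plus-noise bound~\eqref{equ:linear:mse_long_term} (up to a harmless reindexing of the exponent). For the asymptotic claim, when $n_k\to\infty$, splitting the noise sum at a cutoff $J$ makes the head decay like $\rho^{2(k-J)}$ while the tail is bounded by $\bigl(\sup_{j\ge J}1/n_j\bigr)\cdot\rho/(1-\rho^2)$; sending first $k\to\infty$ and then $J\to\infty$ gives $\mathbb{E}\|\hat\theta^k-\theta_c\|^2\to 0$.

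The main obstacle is the uniform contraction argument in the second paragraph: promoting the pointwise strict inequality $|\kappa(\delta)|<|\delta|$ to a single $\rho<1$ requires the compactness/continuity synthesis of (a)--(c), and the truncated-normal mean has no elementary closed form, so the strict contraction in (a) must be argued by a direct comparison of the asymmetric mass cut on either side of $\mu_c$ (more density is removed from the tail further from $\mu$, which shifts the conditional mean strictly toward $\mu_c$). The other ingredients---block-diagonal decoupling, total variance, and iterating a scalar geometric recursion---are routine.
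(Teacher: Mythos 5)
Your proposal is correct and follows the same overall strategy as the paper: use the block-orthogonal covariate design to diagonalize the update into $p$ scalar truncated-Gaussian recursions, establish a uniform geometric contraction toward $v_j^\top\theta_c$ in each coordinate, bound the per-step injected variance by $\sigma^2/n$, and unroll. The one place you genuinely diverge from the paper is in how the contraction constant $\rho<1$ is obtained. The paper identifies the derivative of the (standardized) mean map directly with a conditional variance, $T'(x)=\mathrm{Var}(Z\mid \alpha-x\le Z\le\beta-x)$, which is strictly less than $1$ for every $x$ because truncation of a log-concave density to an interval never increases variance, and which decays to $0$ at infinity; a one-line compactness observation then yields $\sup_x T'(x)<1$, and the mean-value theorem converts this tangent bound to the secant bound. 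Your argument instead targets the secant ratio $|\kappa(\delta)|/|\delta|$ head-on, using (a) a pointwise strict contraction via the asymmetric-mass-cut intuition, (b) $\kappa'(0)\in(0,1)$, and (c) boundedness of $\kappa$, glued by a compactness argument. This works, but it is more roundabout: your step (a) is an essentially separate argument that the paper gets for free from $T'=v\in(0,1)$ and $T(0)=0$, and you must verify continuity of $\kappa(\delta)/\delta$ at $0$ by appealing to differentiability. Worth noting: because $T'=v$ also gives $\mathrm{Var}(\bar y_j\mid\hat\theta^k)\le\rho\,\sigma^2/n$, the paper's noise term carries an extra factor of $\rho$ (hence the odd exponent $\rho^{2(k-j)-1}$ in the theorem), while your coarser $\mathrm{Var}\le\sigma^2/n$ loses this constant; combined with your indexing by $n_{k+1}$ versus the paper's $n_k$, this is the "harmless reindexing" you flag. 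Neither difference affects correctness or the asymptotic conclusion.
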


The proof of Theorem~\ref{thm:linear:long_term} is provided in Appendix~\ref{app:linear}, utilizing concentration bounds and supermartingale inequalities to establish convergence.
Here we focus on the main intuition and highlight the key novelty of our analysis.  

The central observation to establish Theorem~\ref{thm:linear:long_term} is that the iterative retraining procedure (see Figure~\ref{fig:pipeline}) induces a \emph{Markov process}: the next state $\hat{\theta}^{k+1}$ depends only on the current state $\hat{\theta}^k$.  
Formally, the update can be expressed as
\begin{align}
\hat{\theta}^{k+1} = T(\hat{\theta}^k) + \eta_{k+1}, \label{equ:linear:intuition}
\end{align}
where $T(\cdot)$ is a deterministic mapping determined by verifier filtering, and $\eta_{k+1}$ is a sub-Gaussian noise term due to the randomness of synthetic samples at iteration $k+1$.  
Crucially, we show that $T(\cdot)$ is a \emph{contraction mapping}, and that the variance of the noise decays at the rate $\Var(\eta_{k+1}) \asymp 1/n_{k+1}$. This perspective allows us to view \eqref{equ:linear:intuition} as a discretized stochastic differential equation (SDE).  
As $n_k \to \infty$, the noise term vanishes and the dynamics are dominated by the deterministic contraction $T(\hat{\theta}^k)$, which drives the recursion toward its fixed point---the verifier's knowledge center $\theta_c$.  
The presence of the verifier is therefore \emph{essential}: it is precisely what transforms the update rule into a contraction, guaranteeing convergence.

By contrast, in prior work on model collapse without a verifier (e.g., \citet{gerstgrasser2024model, xu2025probabilistic}), the update reduces to the identity mapping.  
In that case, increasing the synthetic sample size can suppress noise accumulation and ensure bounded error (i.e., $\text{MSE}(\hat{\theta}^k) < \infty$), but there is no contraction and hence no convergence or sustained improvement. 
\textcolor{black}{The knowledge extracted from the verifier is precisely what elevates $T(\cdot)$ beyond an identity mapping.}
Our analysis is the first to formally show that the verifier fundamentally alters the long-term dynamics: it continuously injects knowledge, iteration by iteration, so that the estimator moves closer to $\theta_c$ over time.  

This contribution also clarifies a common misconception: even with a perfect verifier ($\theta_c = \theta^\star$) and infinitely many synthetic samples in one iteration, convergence cannot occur in a single step.  
As shown in Theorem~\ref{thm:linear:one_step}, while infinite samples remove the synthetic variance term, the verification error term persists.  
Thus, convergence requires the \emph{iterative} action of the verifier, which gradually aligns the estimator with the truth.


\section{Experiments}\label{sec:experiment}

In this section, we validate our theoretical predictions across three settings: a \emph{linear regression simulation} that is consistent with our analytical assumptions,
\textcolor{black}{alongside \emph{Variational Autoencoders (VAEs) on MNIST} and fine-tuning a pretrained \texttt{SmolLM2-135M} \citep{allal2025smollm2smolgoesbig} on a large-scale news summarization task,
which together illustrate practical behavior under iterative retraining and filtering.}
Across all settings, the empirical results closely align with our theoretical predictions. Experimental code used to generate the results is publicly
available at \url{https://github.com/liuqiyuanhhh/Verified-Synthetic-Data}.

\begin{figure}[t] 
    \centering
    \begin{subfigure}[b]{0.43\textwidth}
        \centering
        \includegraphics[width=\linewidth]{./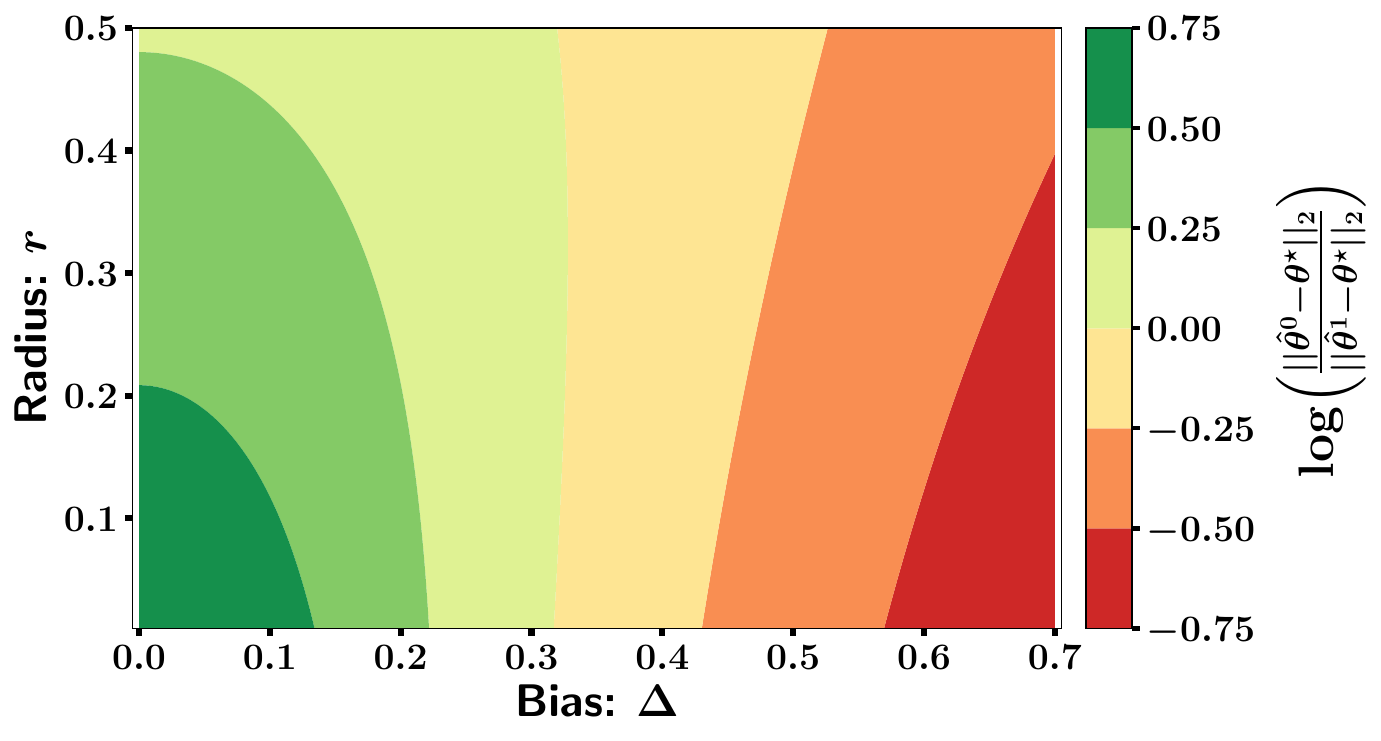}
        \caption{One-step: Theoretical Loss Landscape}
        \label{fig:linear_one_round_theory}
    \end{subfigure}
    \hfill
    \begin{subfigure}[b]{0.43\textwidth}
        \centering
        \includegraphics[width=\linewidth]{./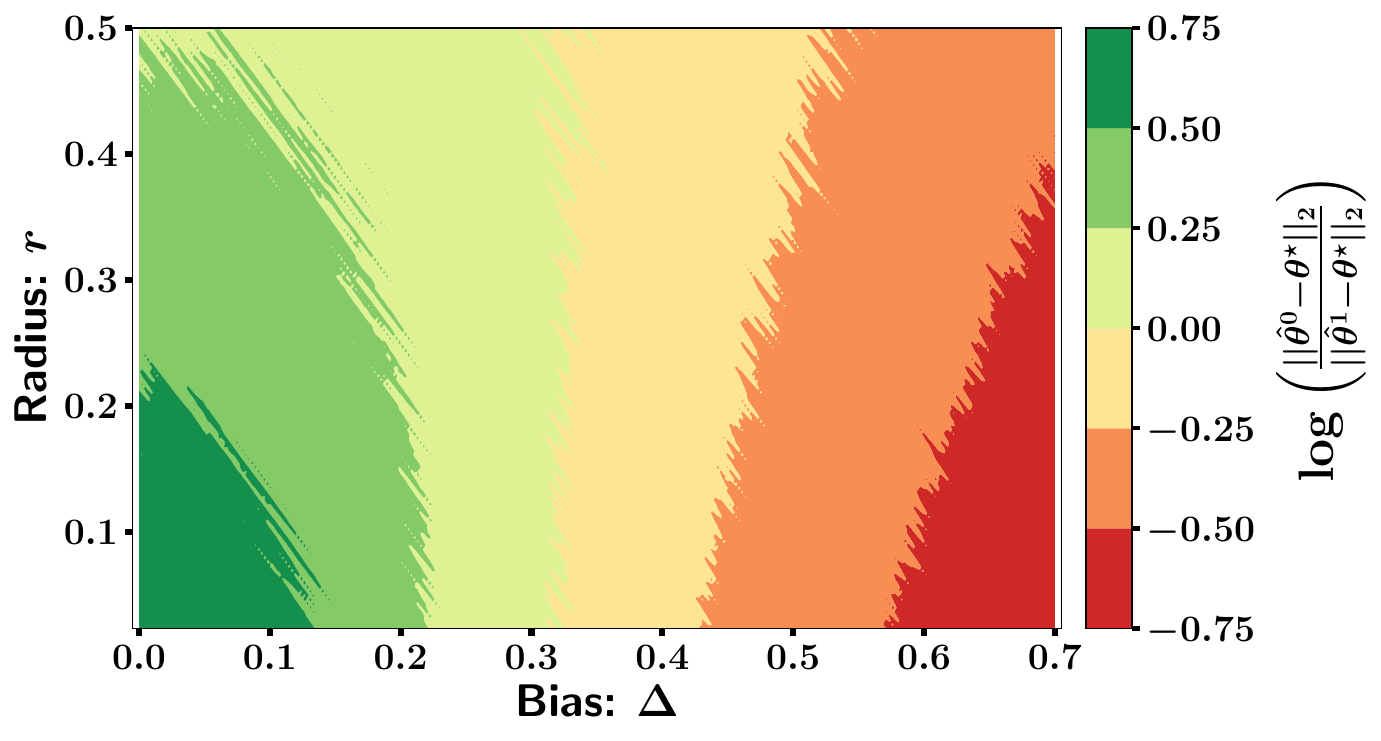}
        \caption{One-step: Empirical Loss Landscape}
        \label{fig:linear_one_round_empirical}
    \end{subfigure}

    \vspace{0mm} 

    \begin{subfigure}[b]{0.45\textwidth}
        \centering
        \includegraphics[width=\linewidth]{./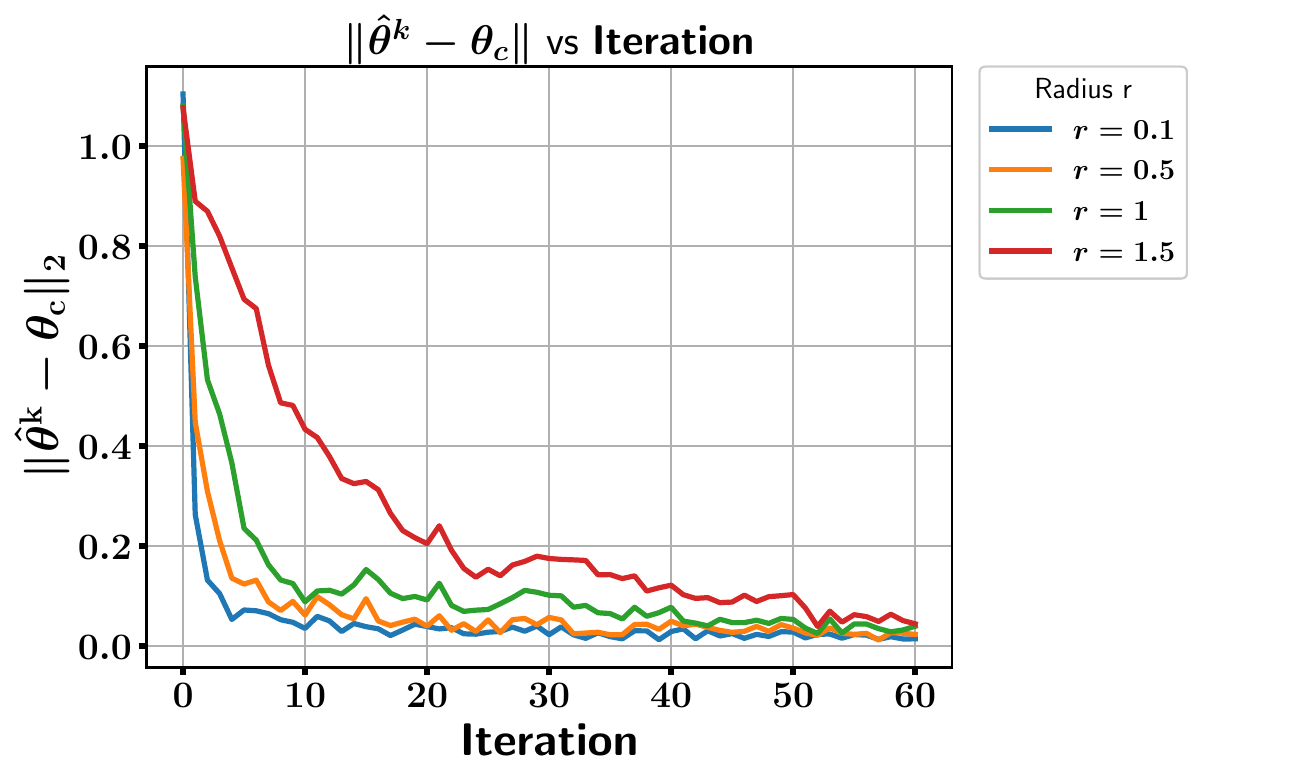}
        \caption{Iterative: Verifier with bias}
        \label{fig:linear1}
    \end{subfigure}
    \hfill
    \begin{subfigure}[b]{0.45\textwidth}
        \centering
        \includegraphics[width=\linewidth]{./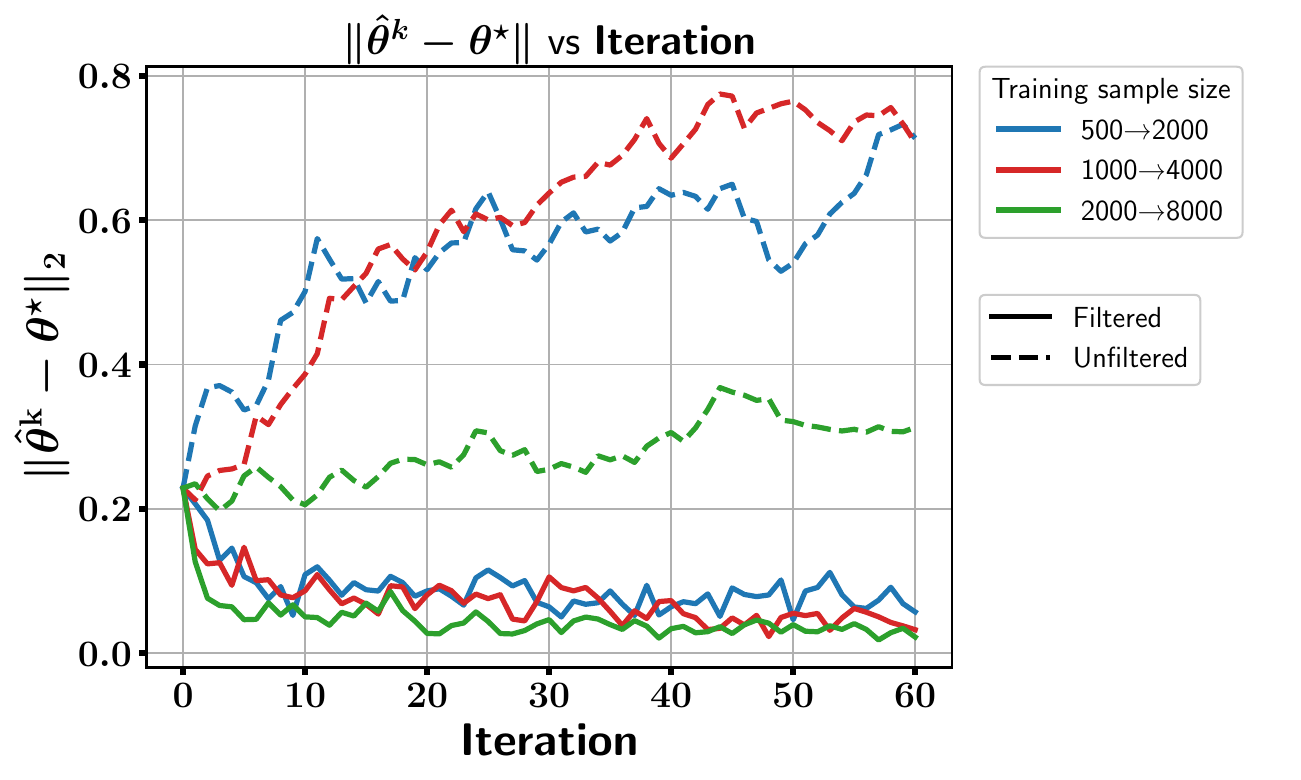}
        \caption{Iterative: Verifier without bias}
        \label{fig:no_bias_linear_regression}
    \end{subfigure}
    \caption{
        \textbf{Top:} Error changes of the one-step  retraining estimator $\hat{\theta}^1$  versus  estimator $\hat{\theta}^0$ only using  original real data, measured by $\log(\frac{||\hat{\theta}^0 - \theta^\star||}{||\hat{\theta}^1 - \theta^\star||})$: theory's prediction (a) and empirical comparisons (b). 
        \textbf{Bottom:} Iterative synthetic retraining over 60 rounds with a biased verifier ($\Delta=1$) and an unbiased verifier ($\Delta=0$).
    }
    \label{fig:linear_combined}
\end{figure}
\vspace{-2mm}
\subsection{Linear Regression on Simulated Data}

\paragraph{Setting.}
We consider the linear model $y =  x^{\top}\theta^\star + \xi$, with $\xi \sim \mathcal{N}(0,1)$, $\theta^\star \in \mathbb{R}^p$, and $x \in \mathbb{R}^p$. 
We first fit an OLS estimator on the real dataset $(X^0, Y^0)$, and then perform multiple rounds of synthetic retraining,
where the synthetic covariate design is aligned with the right singular vectors of $X^0$.
\vspace{-3mm}
\paragraph{One-step Synthetic Retraining.}
Figures~\ref{fig:linear_one_round_theory} and \ref{fig:linear_one_round_empirical} empirically validates Theorem~\ref{thm:linear:one_step} by comparing the real-data estimator $\hat{\theta}^0$ with the one-step synthetic estimator $\hat{\theta}^1$, where color regions reflect different levels of error reduction, measured by $\log(\frac{||\hat{\theta}^0 - \theta^\star|| }{||\hat{\theta}^1 - \theta^\star||})$.
Theoretical predictions (Figure~\ref{fig:linear_one_round_theory}) align closely with empirical results (Figure~\ref{fig:linear_one_round_empirical}), validating the sharpness of our bounds.
\textcolor{black}{Using 100 real and 800 synthetic samples, we set $\theta^\star = \mathbf{1}_8$ and the verifier center $\theta_c = \theta^\star + \Delta \cdot u$, where $u$ is a random unit vector and $\Delta$ controls bias magnitude.}
The radius $r$ (see~\eqref{equ:linear:verify_condition}) determines verifier selectivity.
These results confirm that synthetic retraining outperforms the baseline under small bias (green region) but degrades under excessive bias (red region), consistent with the predicted short-term bias--variance trade-off.

\vspace{-3mm}
\paragraph{Iterative Synthetic Retraining.}
Figure~\ref{fig:linear1} validates Theorem~\ref{thm:linear:long_term}, showing that under a biased verifier ($\Delta=1$), the retrained estimator converges to the verifier's knowledge center $\theta_c$ rather than the true parameter $\theta^\star$.
In this 60-round experiment, sample size increases linearly from 100 to 5500, and results show that a more selective verifier (smaller $r$) accelerates convergence.
Figure~\ref{fig:no_bias_linear_regression} presents the unbiased case ($\Delta=0$), where verifier-based retraining consistently outperforms unfiltered baselines.
Together, these findings illustrate how the verifier's contraction effect sustains error reduction and prevents model collapse.
\textcolor{black}{Additional experiments confirm that our conclusions are robust to random covariate designs (Appendix~\ref{sec:random_synthetic_data}) and different verifier shapes (Appendix~\ref{sec:verifier_shape}).}

\subsection{Variational Autoencoders (VAEs) on MNIST}
Extending beyond linear regression, we evaluate our theory on real-world image generation tasks using Variational Autoencoders (VAEs) on the MNIST dataset.
\vspace{-3mm}
\paragraph{Setting and evaluation metrics.} 
\textcolor{black}{Specifically, we adopt Conditional VAEs (CVAEs) to leverage class conditioning and avoid verifier-induced imbalance; otherwise, easily generated digits would dominate the retained synthetic data.}
To test the bias--variance trade-off and verifier information injection, we initialize the CVAE with only 500 real images (a challenging \emph{small dataset} scenario). 
A discriminator, trained on varying amounts of real data alongside an equal number of synthetic samples, serves as the verifier.
It assigns a reality probability to each synthetic sample, from which we retain the top 10\% per digit.
Motivated by our one-step analysis, this 10\% threshold optimally balances sample quality against diversity.
The CVAE is iteratively retrained on verified data for 40 iterations until performance stabilizes.
The synthetic sample size $n_k$ follows either a fixed or linear growth schedule.
We evaluate generative performance using Fr\'echet Inception Distance (FID) \citep{heusel2017gans} and the evidence lower bound (ELBO) \citep{kingmaauto2014}.
\textcolor{black}{Appendices~\ref{app:detail} and \ref{sec:initial_sample_size} provide implementation details and initial sample size ablations.}
\vspace{-3mm}
\paragraph{Results.}

Since our verifier emphasizes perceptual realism over likelihood calibration, we report FID as the primary metric and defer qualitatively similar ELBO results to Appendix~\ref{app:detail}.
As shown in Figure~\ref{fig:fid}, synthetic retraining with a strong verifier (trained on 60K real images) yields rapid early FID improvement before plateauing, whereas unverified retraining (dashed curves) leads to severe degradation.
This validates our theory: early gains arise from the short-term bias-variance trade-off (Theorem~\ref{thm:linear:one_step}), and the verifier's contraction effect drives convergence to a fixed—though biased—point that substantially improves upon the initial CVAE (Theorem~\ref{thm:linear:long_term}).
The eventual plateau reflects two limitations: training the initial generator on only 500 real images inherently restricts overall diversity, and the standard MLP verifier lacks diversity-preserving mechanisms, introducing selection bias by disproportionately rejecting harder-to-generate modes.
For reference, a baseline CVAE trained on 60K real images achieves 17.56 FID, whereas the best synthetic model reaches 21.17 after 40 iterations.
Figure~\ref{fig:verifier} further shows that with 20K synthetic samples per round, stronger verifiers (trained on more real data) produce larger FID improvements, while weaker ones cause early plateau or even degradation. 
Qualitative results are shown in Figure~\ref{fig:mnist-illustration}, with consistent MNIST-specific FID results reported in Appendix~\ref{sec:mnist-fid}.

\begin{figure}
    \centering
    \begin{subfigure}[t]{0.495\textwidth}
        \centering
        \includegraphics[width=\textwidth]{./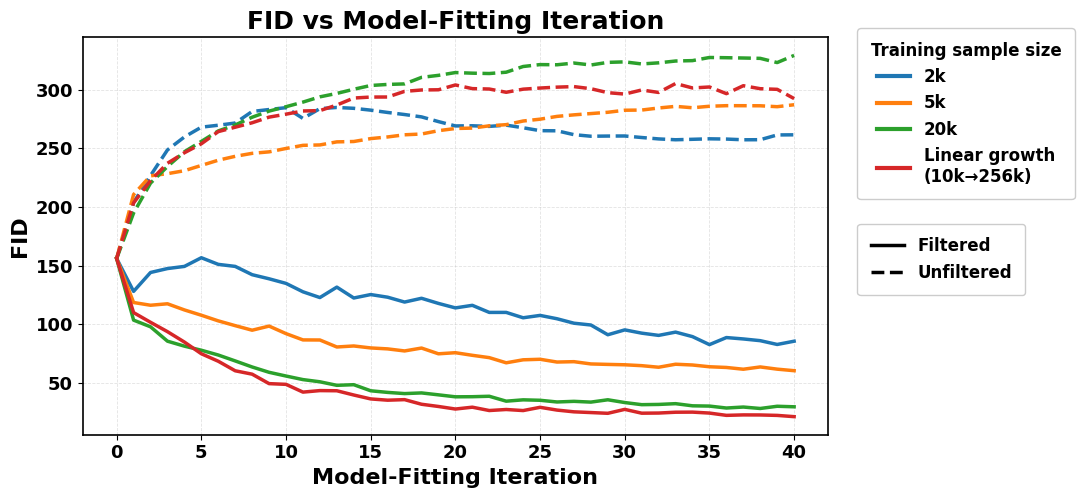}
        \caption{Filtering and training sample size.}
        \label{fig:fid}
    \end{subfigure}
    \hfill
    \begin{subfigure}[t]{0.495\textwidth}
        \centering
        \includegraphics[width=\textwidth]{./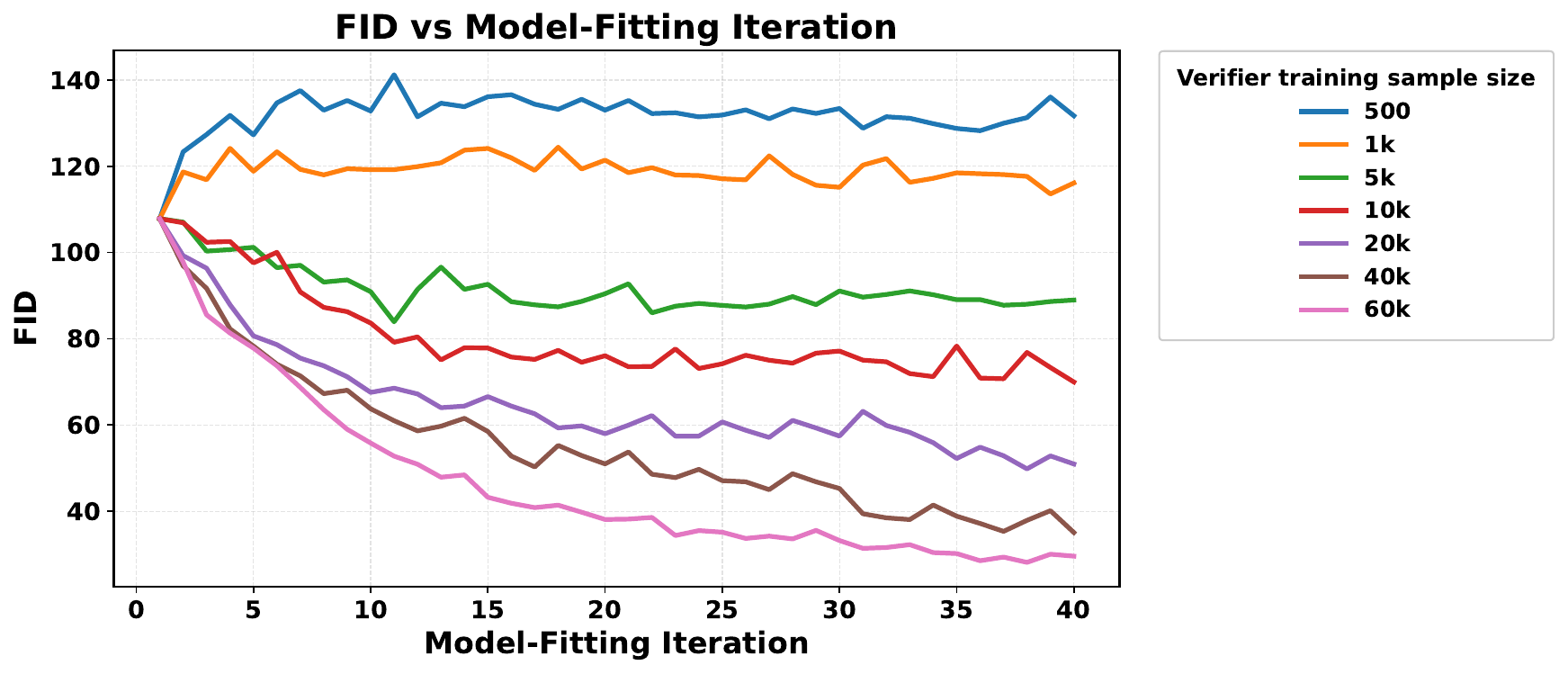}
        \caption{Verifier quality.}
        \label{fig:verifier}
    \end{subfigure}
    \caption{FID results across retraining rounds. 
    (a) Effect of filtering and retained sample size. 
    (b) Effect of verifier quality, varied by training data size. 
    Together, the plots highlight how both sample selection and verifier strength shape generative performance.}
    \label{fig:fid_verifier}
\end{figure}

\subsection{Large-Scale News Summarization} \label{sec:llm_experiment}

Extending beyond image generation, we evaluate our theory on natural-language tasks using the \textsc{XSUM} news-summarization dataset~\citep{xsum-emnlp}.

\paragraph{Setting and evaluation metrics.}

\textcolor{black}{
We use the the pretrained \texttt{SmolLM2-135M} model~\citep{allal2025smollm2smolgoesbig} as our generator, first fine-tuning it on 12.5\% of the \textsc{XSUM} training set for one epoch using full-parameter training.
While following the base setup of \cite{feng2024beyond}, we depart from their single-round evaluation by considering a \emph{multi-iteration} generate--verify--retrain regime to capture how performance evolves over repeated cycles.} Given the low-entropy nature of news summarization, we follow the common practice of employing greedy decoding for both generation and evaluation.

\begin{wrapfigure}{r}{0.45\linewidth}
    \centering
    \includegraphics[width=\linewidth]{./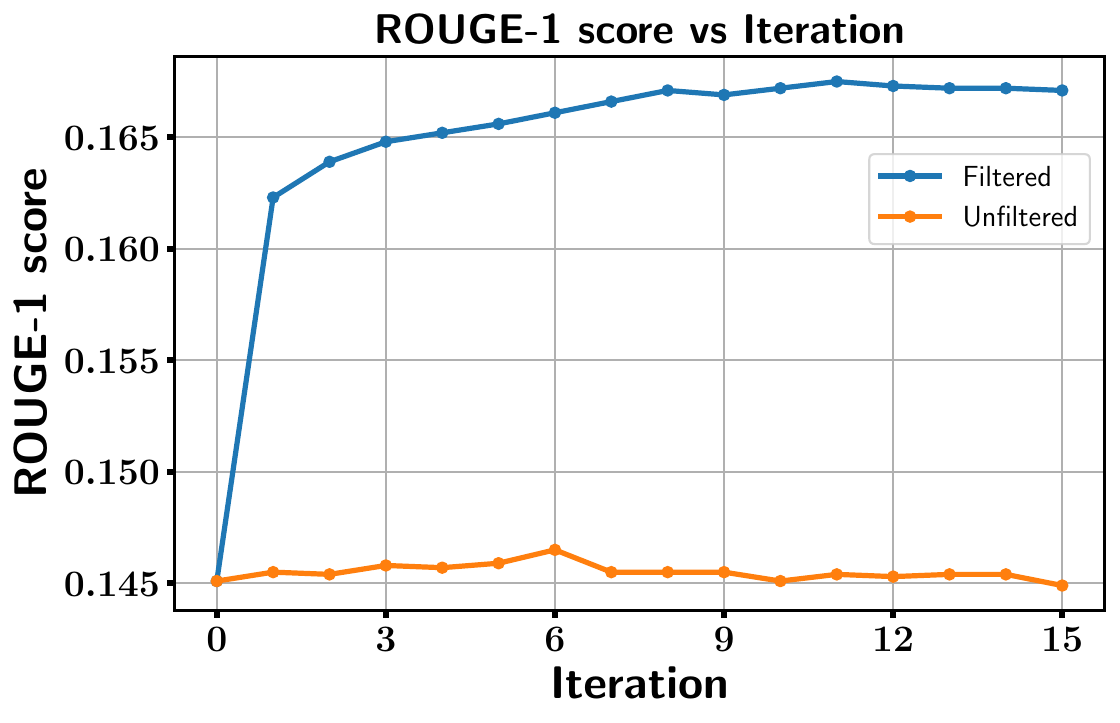}
    \caption{\textbf{ROUGE-1 score vs.\ iteration on the \textsc{XSUM} dataset.}
    Filtered retraining yields consistent early improvements, whereas unfiltered retraining shows no significant gain.
    }
    \label{fig:xsum_rouge_iterations}
\end{wrapfigure}
In each iteration, the model generates synthetic summaries for the training corpus, which are scored using ROUGE-1 against ground-truth references. 
Acting as an oracle verifier, we select the top 12.5\% of these summaries to form the synthetic retraining dataset. The model is then retrained on this subset, and test ROUGE-1 scores are recorded after each iteration.

\paragraph{Results.} Figure~\ref{fig:xsum_rouge_iterations} reports ROUGE-1 scores across 15 rounds for both filtered and unfiltered retraining regimes, while all other experimental conditions remain identical. 
Under verifier filtering, performance improves monotonically during early iterations before eventually stabilizing. 
In contrast, the unfiltered baseline fluctuates around its initial score without meaningful improvement. 
These observed dynamics mirror our MNIST findings and theoretical predictions, demonstrating that our framework scales to natural-language tasks.

\section{Discussion}

Our study provides a theoretical and empirical characterization of verifier-guided synthetic retraining.  
We show that the process yields \emph{short-term gains} by reducing variance through verifier filtering, but in the \emph{long run} the estimator converges to the verifier’s knowledge center.  
This explains both the promise and the risk of such methods: a high-quality verifier can inject reliable external knowledge, while a biased verifier inevitably steers the model away from the truth.  
Viewed through the lens of \emph{information elicitation}, our framework formalizes how external signals are incorporated recursively into training and why the outcome reflects the verifier’s information.  


Meanwhile, we also acknowledge the limitations of our results.Primarily, our analytical testbed relies on a well-specified parametric setting (linear regression) that assumes the existence of a global, ground-truth optimal parameter $\theta^*$.
This idealized assumption mathematically abstracts the complex, often competing attributes of a "good" generative model---such as sample diversity and generation quality---into a singular distance metric.
In practice, while models like LLMs are parametric, they are approximating a true data-generating process that is unknown and highly likely non-parametric.
For complex domains like natural language, a singular "true model" may not even exist; therefore, defining or evaluating an optimal model remains an open challenge.
While our empirical extensions to VAEs and LLMs validate the theory qualitatively, formal generalization to richer models such as exponential families or simple neural network architectures are interesting future directions.
Other future venues include developing sharper bounds for nonlinear models, exploring effectiveness of alternative synthetic design strategies beyond block orthogonalization, and studying verifier dynamics in large language models (LLMs) and vision models.   

\clearpage
\bibliographystyle{iclr2026ready_conference}
\bibliography{reference}

@article{xu2025probabilistic,
  title   = {A Probabilistic Perspective on Model Collapse},
  author  = {Xu, Shirong and He, Hengzhi and Cheng, Guang},
  journal = {arXiv preprint arXiv:2505.13947},
  year    = {2025}
}

@article{dey2024universality,
  title   = {Universality of the $\pi^2/6$ Pathway in Avoiding Model Collapse},
  author  = {Dey, Apratim and Donoho, David},
  journal = {arXiv preprint arXiv:2410.22812},
  year    = {2024}
}

@article{zelikman2022star,
  title   = {Star: Bootstrapping reasoning with reasoning},
  author  = {Zelikman, Eric and Wu, Yuhuai and Mu, Jesse and Goodman, Noah},
  journal = {Advances in Neural Information Processing Systems},
  volume  = {35},
  pages   = {15476--15488},
  year    = {2022}
}

@inproceedings{zhusynthesize,
  title     = {How to Synthesize Text Data without Model Collapse?},
  author    = {Zhu, Xuekai and Cheng, Daixuan and Li, Hengli and Zhang, Kaiyan and Hua, Ermo and Lv, Xingtai and Ding, Ning and Lin, Zhouhan and Zheng, Zilong and Zhou, Bowen},
  booktitle = {Forty-second International Conference on Machine Learning, ICML},
  year      = {2025}
}

@inproceedings{weiself,
  title     = {Self-Consuming Generative Models with Adversarially Curated Data},
  author    = {Wei, Xiukun and Zhang, Xueru},
  booktitle = {Forty-second International Conference on Machine Learning}
}

@article{gunasekar2023textbooks,
  title   = {Textbooks are all you need},
  author  = {Gunasekar, Suriya and Zhang, Yi and Aneja, Jyoti and Mendes, Caio C{\'e}sar Teodoro and Del Giorno, Allie and Gopi, Sivakanth and Javaheripi, Mojan and Kauffmann, Piero and de Rosa, Gustavo and Saarikivi, Olli and others},
  journal = {arXiv preprint arXiv:2306.11644},
  year    = {2023}
}

@article{guo2024deepseek,
  title   = {DeepSeek-Coder: When the Large Language Model Meets Programming-The Rise of Code Intelligence},
  author  = {Guo, Daya and Zhu, Qihao and Yang, Dejian and Xie, Zhenda and Dong, Kai and Zhang, Wentao and Chen, Guanting and Bi, Xiao and Wu, Y and Li, YK and others},
  journal = {CoRR},
  year    = {2024}
}

@article{guo2025deepseek,
  title     = {Deepseek-r1 incentivizes reasoning in llms through reinforcement learning},
  author    = {Guo, Daya and Yang, Dejian and Zhang, Haowei and Song, Junxiao and Wang, Peiyi and Zhu, Qihao and Xu, Runxin and Zhang, Ruoyu and Ma, Shirong and Bi, Xiao and others},
  journal   = {Nature},
  volume    = {645},
  number    = {8081},
  pages     = {633--638},
  year      = {2025},
  publisher = {Nature Publishing Group UK London}
}

@inproceedings{yu2025dapo,
  title     = {DAPO: An Open-Source LLM Reinforcement Learning System at Scale},
  author    = {Yu, Qiying and Zhang, Zheng and Zhu, Ruofei and Yuan, Yufeng and Zuo, Xiaochen and Yue, Yu and Fan, Tiantian and Liu, Gaohong and Liu, Lingjun and Liu, Xin and others},
  booktitle = {Advances in Neural Information Processing Systems},
  volume    = {38},
  year      = {2025}
}

@article{chen2024diversity,
  title   = {On the diversity of synthetic data and its impact on training large language models},
  author  = {Chen, Hao and Waheed, Abdul and Li, Xiang and Wang, Yidong and Wang, Jindong and Raj, Bhiksha and Abdin, Marah I},
  journal = {arXiv preprint arXiv:2410.15226},
  year    = {2024}
}

@inproceedings{wu2025invisible,
  title     = {The Invisible Leash: Why RLVR May Not Escape Its Origin},
  author    = {Wu, Fang and Choi, Yejin},
  booktitle = {2nd AI for Math Workshop@ ICML 2025}
}

@article{heusel2017gans,
  title   = {Gans trained by a two time-scale update rule converge to a local nash equilibrium},
  author  = {Heusel, Martin and Ramsauer, Hubert and Unterthiner, Thomas and Nessler, Bernhard and Hochreiter, Sepp},
  journal = {Advances in neural information processing systems},
  volume  = {30},
  year    = {2017}
}

@inproceedings{kingmaauto2014,
  title     = {Auto-encoding variational  Bayes},
  author    = {Kingma, Diederik P and Welling, Max},
  booktitle = {International Conference on Learning Representations (ICLR)},
  year      = 2014
}

@article{schaeffer2025position,
  title   = {Position: Model collapse does not mean what you think},
  author  = {Schaeffer, Rylan and Kazdan, Joshua and Arulandu, Alvan Caleb and Koyejo, Sanmi},
  journal = {arXiv preprint arXiv:2503.03150},
  year    = {2025}
}

@inproceedings{hesynthetic,
  title     = {Is synthetic data from generative models ready for image recognition?},
  author    = {He, Ruifei and Sun, Shuyang and Yu, Xin and Xue, Chuhui and Zhang, Wenqing and Torr, Philip and Bai, Song and Qi, Xiaojuan},
  booktitle = {The Eleventh International Conference on Learning Representations},
  year      = 2023
}

@article{tian2023stablerep,
  title   = {Stablerep: Synthetic images from text-to-image models make strong visual representation learners},
  author  = {Tian, Yonglong and Fan, Lijie and Isola, Phillip and Chang, Huiwen and Krishnan, Dilip},
  journal = {Advances in Neural Information Processing Systems},
  volume  = {36},
  pages   = {48382--48402},
  year    = {2023}
}

@article{gu2024survey,
  title   = {A survey on llm-as-a-judge},
  author  = {Gu, Jiawei and Jiang, Xuhui and Shi, Zhichao and Tan, Hexiang and Zhai, Xuehao and Xu, Chengjin and Li, Wei and Shen, Yinghan and Ma, Shengjie and Liu, Honghao and others},
  journal = {arXiv preprint arXiv:2411.15594},
  year    = {2024}
}

@article{zheng2023judging,
  title   = {Judging llm-as-a-judge with mt-bench and chatbot arena},
  author  = {Zheng, Lianmin and Chiang, Wei-Lin and Sheng, Ying and Zhuang, Siyuan and Wu, Zhanghao and Zhuang, Yonghao and Lin, Zi and Li, Zhuohan and Li, Dacheng and Xing, Eric and others},
  journal = {Advances in neural information processing systems},
  volume  = {36},
  pages   = {46595--46623},
  year    = {2023}
}

@article{shumailov2024ai,
  title     = {AI models collapse when trained on recursively generated data},
  author    = {Shumailov, Ilia and Shumaylov, Zakhar and Zhao, Yiren and Papernot, Nicolas and Anderson, Ross and Gal, Yarin},
  journal   = {Nature},
  volume    = {631},
  number    = {8022},
  pages     = {755--759},
  year      = {2024},
  publisher = {Nature Publishing Group UK London}
}

@article{ouyang2022training,
  title   = {Training language models to follow instructions with human feedback},
  author  = {Ouyang, Long and Wu, Jeffrey and Jiang, Xu and Almeida, Diogo and Wainwright, Carroll and Mishkin, Pamela and Zhang, Chong and Agarwal, Sandhini and Slama, Katarina and Ray, Alex and others},
  journal = {Advances in neural information processing systems},
  volume  = {35},
  pages   = {27730--27744},
  year    = {2022}
}

@inproceedings{gerstgrasser2024model,
  title     = {Is Model Collapse Inevitable? Breaking the Curse of Recursion by Accumulating Real and Synthetic Data},
  author    = {Gerstgrasser, Matthias and Schaeffer, Rylan and Dey, Apratim and Rafailov, Rafael and Korbak, Tomasz and Sleight, Henry and Agrawal, Rajashree and Hughes, John and Pai, Dhruv Bhandarkar and Gromov, Andrey and others},
  booktitle = {First Conference on Language Modeling (COLM)},
  year      = {2024}
}

@inproceedings{alemohammad2023self,
  title     = {Self-consuming generative models go mad},
  author    = {Alemohammad, Sina and Casco-Rodriguez, Josue and Luzi, Lorenzo and Humayun, Ahmed Imtiaz and Babaei, Hossein and LeJeune, Daniel and Siahkoohi, Ali and Baraniuk, Richard},
  booktitle = {The Twelfth International Conference on Learning Representations},
  year      = {2024}
}

@inproceedings{feng2024beyond,
  title     = {Beyond Model Collapse: Scaling Up with Synthesized Data Requires Verification},
  author    = {Feng, Yunzhen and Dohmatob, Elvis and Yang, Pu and Charton, Francois and Kempe, Julia},
  booktitle = {The Thirteenth International Conference on Learning Representations},
  year      = {2025}
}

@inproceedings{dohmatobstrong,
  title     = {Strong Model Collapse},
  author    = {Dohmatob, Elvis and Feng, Yunzhen and Subramonian, Arjun and Kempe, Julia},
  booktitle = {The Thirteenth International Conference on Learning Representations},
  year      = {2025}
}

@article{dohmatob2024model,
  title   = {Model collapse demystified: The case of regression},
  author  = {Dohmatob, Elvis and Feng, Yunzhen and Kempe, Julia},
  journal = {Advances in Neural Information Processing Systems},
  volume  = {37},
  pages   = {46979--47013},
  year    = {2024}
}

@article{ferbach2024self,
  title   = {Self-Consuming Generative Models with Curated Data Provably Optimize Human Preferences},
  author  = {Ferbach, Damien and Bertrand, Quentin and Bose, Joey and Gidel, Gauthier},
  journal = {Advances in Neural Information Processing Systems},
  volume  = {37},
  pages   = {102531--102567},
  year    = {2024}
}

@article{zhang2024regurgitative,
  title   = {Regurgitative training: The value of real data in training large language models},
  author  = {Zhang, Jinghui and Qiao, Dandan and Yang, Mochen and Wei, Qiang},
  journal = {arXiv preprint arXiv:2407.12835},
  year    = {2024}
}

@article{azizi2021can,
  title     = {Can synthetic data be a proxy for real clinical trial data? A validation study},
  author    = {Azizi, Zahra and Zheng, Chaoyi and Mosquera, Lucy and Pilote, Louise and El Emam, Khaled},
  journal   = {BMJ open},
  volume    = {11},
  number    = {4},
  pages     = {e043497},
  year      = {2021},
  publisher = {British Medical Journal Publishing Group}
}

@article{santangelo2025good,
  title     = {How good is your synthetic data? SynthRO, a dashboard to evaluate and benchmark synthetic tabular data},
  author    = {Santangelo, Gabriele and Nicora, Giovanna and Bellazzi, Riccardo and Dagliati, Arianna},
  journal   = {BMC Medical Informatics and Decision Making},
  volume    = {25},
  number    = {1},
  pages     = {89},
  year      = {2025},
  publisher = {Springer}
}

@inproceedings{wood2021fake,
  title     = {Fake it till you make it: face analysis in the wild using synthetic data alone},
  author    = {Wood, Erroll and Baltru{\v{s}}aitis, Tadas and Hewitt, Charlie and Dziadzio, Sebastian and Cashman, Thomas J and Shotton, Jamie},
  booktitle = {Proceedings of the IEEE/CVF international conference on computer vision},
  pages     = {3681--3691},
  year      = {2021}
}

@article{potluru2023synthetic,
  title   = {Synthetic data applications in finance},
  author  = {Potluru, Vamsi K and Borrajo, Daniel and Coletta, Andrea and Dalmasso, Niccol{\`o} and El-Laham, Yousef and Fons, Elizabeth and Ghassemi, Mohsen and Gopalakrishnan, Sriram and Gosai, Vikesh and Krea{\v{c}}i{\'c}, Eleonora and others},
  journal = {arXiv preprint arXiv:2401.00081},
  year    = {2023}
}

@inproceedings{dohmatob2024tale,
  title        = {A Tale of Tails: Model Collapse as a Change of Scaling Laws},
  author       = {Dohmatob, Elvis and Feng, Yunzhen and Yang, Pu and Charton, Francois and Kempe, Julia},
  booktitle    = {International Conference on Machine Learning},
  pages        = {11165--11197},
  year         = {2024},
  organization = {PMLR}
}

@inproceedings{shrivastava2017learning,
  title     = {Learning from simulated and unsupervised images through adversarial training},
  author    = {Shrivastava, Ashish and Pfister, Tomas and Tuzel, Oncel and Susskind, Joshua and Wang, Wenda and Webb, Russell},
  booktitle = {Proceedings of the IEEE conference on computer vision and pattern recognition},
  pages     = {2107--2116},
  year      = {2017}
}

@article{doersch2019sim2real,
  title   = {Sim2real transfer learning for 3d human pose estimation: motion to the rescue},
  author  = {Doersch, Carl and Zisserman, Andrew},
  journal = {Advances in Neural Information Processing Systems},
  volume  = {32},
  year    = {2019}
}

@inproceedings{tremblay2018training,
  title     = {Training deep networks with synthetic data: Bridging the reality gap by domain randomization},
  author    = {Tremblay, Jonathan and Prakash, Aayush and Acuna, David and Brophy, Mark and Jampani, Varun and Anil, Cem and To, Thang and Cameracci, Eric and Boochoon, Shaad and Birchfield, Stan},
  booktitle = {Proceedings of the IEEE conference on computer vision and pattern recognition workshops},
  pages     = {969--977},
  year      = {2018}
}

@inproceedings{tu2025resofilter,
  title     = {ResoFilter: Fine-grained Synthetic Data Filtering for Large Language Models through Data-Parameter Resonance Analysis},
  author    = {Tu, Zeao and Meng, Xiangdi and He, Yu and Yao, Zihan and Qi, Tianyu and Liu, Jun and Li, Ming},
  booktitle = {Findings of the Association for Computational Linguistics: NAACL 2025},
  pages     = {5414--5428},
  year      = {2025}
}

@inproceedings{iskander2024quality,
  title     = {Quality Matters: Evaluating Synthetic Data for Tool-Using LLMs},
  author    = {Iskander, Shadi and Tolmach, Sofia and Shapira, Ori and Cohen, Nachshon and Karnin, Zohar},
  booktitle = {Proceedings of the 2024 Conference on Empirical Methods in Natural Language Processing},
  pages     = {4958--4976},
  year      = {2024}
}

@article{lupidi2024source2synth,
  title   = {Source2synth: Synthetic data generation and curation grounded in real data sources},
  author  = {Lupidi, Alisia and Gemmell, Carlos and Cancedda, Nicola and Dwivedi-Yu, Jane and Weston, Jason and Foerster, Jakob and Raileanu, Roberta and Lomeli, Maria},
  journal = {arXiv preprint arXiv:2409.08239},
  year    = {2024}
}

@article{lampis2023bridging,
  title   = {Bridging the gap: Enhancing the utility of synthetic data via post-processing techniques},
  author  = {Lampis, Andrea and Lomurno, Eugenio and Matteucci, Matteo},
  journal = {arXiv preprint arXiv:2305.10118},
  year    = {2023}
}

@inproceedings{haluptzoklanguage,
  title     = {Language Models Can Teach Themselves to Program Better},
  author    = {Haluptzok, Patrick and Bowers, Matthew and Kalai, Adam Tauman},
  booktitle = {The Eleventh International Conference on Learning Representations},
  year      = {2023}
}

@article{patwa2024enhancing,
  title   = {Enhancing Low-Resource LLMs Classification with PEFT and Synthetic Data},
  author  = {Patwa, Parth and Filice, Simone and Chen, Zhiyu and Castellucci, Giuseppe and Rokhlenko, Oleg and Malmasi, Shervin},
  journal = {CoRR},
  year    = {2024}
}

@inproceedings{wettigqurating,
  title     = {QuRating: Selecting High-Quality Data for Training Language Models},
  author    = {Wettig, Alexander and Gupta, Aatmik and Malik, Saumya and Chen, Danqi},
  booktitle = {Forty-first International Conference on Machine Learning},
  year      = {2024}
}

@inproceedings{daidiagnosing,
  title     = {Diagnosing and Enhancing VAE Models},
  author    = {Dai, Bin and Wipf, David},
  booktitle = {International Conference on Learning Representations},
  year      = {2019}
}

@inproceedings{leontev2020quality,
  title        = {Quality metrics of variational autoencoders},
  author       = {Leontev, Mikhail and Mikheev, Alexander and Sviatov, Kirill and Sukhov, Sergey},
  booktitle    = {2020 International Conference on Information Technology and Nanotechnology (ITNT)},
  pages        = {1--5},
  year         = {2020},
  organization = {IEEE}
}

@inproceedings{chan2024evaluating,
  title     = {Evaluating the suitability of inception score and fr{\'e}chet inception distance as metrics for quality and diversity in image generation},
  author    = {Chan, Derrick Adrian and Sithungu, Siphesihle Philezwini},
  booktitle = {Proceedings of the 2024 7th International Conference on Computational Intelligence and Intelligent Systems},
  pages     = {79--85},
  year      = {2024}
}

@inproceedings{xsum-emnlp,
  author    = {Shashi Narayan and Shay B. Cohen and Mirella Lapata},
  title     = {Don't Give Me the Details, Just the Summary! {T}opic-Aware Convolutional Neural Networks for Extreme Summarization},
  booktitle = {Proceedings of the 2018 Conference on Empirical Methods in Natural Language Processing },
  year      = {2018},
  address   = {Brussels, Belgium}
}

@misc{allal2025smollm2smolgoesbig,
  title         = {SmolLM2: When Smol Goes Big -- Data-Centric Training of a Small Language Model},
  author        = {Loubna Ben Allal and Anton Lozhkov and Elie Bakouch and Gabriel Martín Blázquez and Guilherme Penedo and Lewis Tunstall and Andrés Marafioti and Hynek Kydlíček and Agustín Piqueres Lajarín and Vaibhav Srivastav and Joshua Lochner and Caleb Fahlgren and Xuan-Son Nguyen and Clémentine Fourrier and Ben Burtenshaw and Hugo Larcher and Haojun Zhao and Cyril Zakka and Mathieu Morlon and Colin Raffel and Leandro von Werra and Thomas Wolf},
  year          = {2025},
  eprint        = {2502.02737},
  archiveprefix = {arXiv},
  primaryclass  = {cs.CL},
  url           = {https://arxiv.org/abs/2502.02737}
}

@article{amin2025escaping,
  title   = {Escaping collapse: The strength of weak data for large language model training},
  author  = {Amin, Kareem and Babakniya, Sara and Bie, Alex and Kong, Weiwei and Syed, Umar and Vassilvitskii, Sergei},
  journal = {arXiv preprint arXiv:2502.08924},
  year    = {2025}
}

@inproceedings{bertrand2024stability,
  title     = {On the Stability of Iterative Retraining of Generative Models on their own Data},
  author    = {Bertrand, Quentin and Bose, Avishek Joey and Duplessis, Alexandre and Jiralerspong, Marco and Gidel, Gauthier},
  booktitle = {ICLR},
  year      = {2024}
}

@inproceedings{kazdancollapse,
  title     = {Collapse or Thrive: Perils and Promises of Synthetic Data in a Self-Generating World},
  author    = {Kazdan, Joshua and Schaeffer, Rylan and Dey, Apratim and Gerstgrasser, Matthias and Rafailov, Rafael and Donoho, David L and Koyejo, Sanmi},
  booktitle = {Forty-second International Conference on Machine Learning},
  year      = {2025}
}

@article{garg2025preventing,
  title   = {Preventing Model Collapse Under Overparametrization: Optimal Mixing Ratios for Interpolation Learning and Ridge Regression},
  author  = {Garg, Anvit and Bhattacharya, Sohom and Sur, Pragya},
  journal = {arXiv preprint arXiv:2509.22341},
  year    = {2025}
}

@article{he2025golden,
  title   = {Golden ratio weighting prevents model collapse},
  author  = {He, Hengzhi and Xu, Shirong and Cheng, Guang},
  journal = {arXiv preprint arXiv:2502.18049},
  year    = {2025}
}

@inproceedings{sureshrate,
  title     = {Rate of Model Collapse in Recursive Training},
  author    = {Suresh, Ananda Theertha and Thangaraj, Andrew and Khandavally, Aditya Nanda Kishore},
  booktitle = {The 28th International Conference on Artificial Intelligence and Statistics},
  year      = {2025}
}

@inproceedings{futheoretical,
  title     = {A Theoretical Perspective: How to Prevent Model Collapse in Self-consuming Training Loops},
  author    = {Fu, Shi and Wang, Yingjie and Chen, Yuzhu and Tian, Xinmei and Tao, Dacheng},
  booktitle = {The Thirteenth International Conference on Learning Representations},
  year      = {2025}
}

@inproceedings{fu2024towards,
  title        = {Towards Theoretical Understandings of Self-Consuming Generative Models},
  author       = {Fu, Shi and Zhang, Sen and Wang, Yingjie and Tian, Xinmei and Tao, Dacheng},
  booktitle    = {International Conference on Machine Learning},
  pages        = {14228--14255},
  year         = {2024},
  organization = {PMLR}
}

@article{barzilai2025models,
  title   = {When Models Don't Collapse: On the Consistency of Iterative MLE},
  author  = {Barzilai, Daniel and Shamir, Ohad},
  journal = {arXiv preprint arXiv:2505.19046},
  year    = {2025}
}

\clearpage
\appendix
\section*{Appendix}
This appendix contains: Appendix~\ref{app:gaussian} (1-D Gaussian toolkit),  Appendix~\ref{app:linear} (reduction and full proof for linear regression), Appendix~\ref{app:detail} (additional details on VAE experiments), Appendix~\ref{app:addexp} (additional simulations and experiments)

\section{One-Dimensional Gaussian Toolkit}\label{app:gaussian}
In this section, we provide a toolkit for analyzing the one-dimensional Gaussian mean estimation problem with verifier-filtered synthetic data.
This toolkit serves as the foundation for our analysis of the linear regression models.
We will establish several key lemmas and theorems that characterize the MSE of the mean estimator under the one-dimensional Gaussian model.
These results will be instrumental in proving Theorem~\ref{thm:linear:one_step} and Theorem~\ref{thm:linear:long_term} in Appendix~\ref{app:linear}.
\subsection{Setup and Notations}
We consider the one-dimensional mean estimation problem where the real data \( X^0_1, \dots, X^0_{n_0} \) are independently and identically distributed (i.i.d.) from a Gaussian distribution:
\[
X^0_1, \dots, X^0_{n_0} \overset{\text{i.i.d.}}{\sim} \mathcal{N}(\mu, \sigma^2),
\]
with known variance \( \sigma^2 \).

In our setting, a verifier exists and encodes external knowledge that the true mean lies in an interval $[a,b]$ (i.e.\( \mu \in [a, b]\)). 
Therefore, $\bar{X}^{0} = \frac{X_1 + \cdots + X_{n_0}}{n_0}$ is the empirical mean of real data, which minimizes MSE if \emph{no extra} information is supplied.
We are interested in whether data verification could effectively inject new information and improve over $\bar{X}^{0}$.
Consider the following synthetic data generation and filtering procedure:
\begin{itemize}
     \item Generate $n_1$ synthetic data \(X^1_1, \dots, X^1_{n_1} \overset{\text{i.i.d.}}{\sim} \mathcal{N}(\bar{X}^{0}, \sigma^2)\).
     \item Retain \(X^0_i \in [a, b]\) as \(X'^1_1, \dots, X'^1_{n_1'}\), and estimate \(\mu\) using $\bar{X}^{1} = \frac{1}{n_1'} \sum_{i=1}^{n_1'} X'^1_i.$
\end{itemize}

We will compare the estimator $\bar{X}^{1}$ with $\bar{X}^{0}$ and formally characterize when data verification enhances or degrades model performance - i.e., when $\mathbb{E}(\bar{X}^{1}-\mu)^2 < \mathbb{E}(\bar{X}^{0}-\mu)^2$ or not.
Our key finding is that $\bar{X}^{1}$ introduces the core bias-variance trade-off that underpins model improvement or degradation.
We will characterize the MSE of $\bar{X}^1$ which reveals how key quantities such as the real and synthetic sample size, the verifier's bias and variance will decide performance of the filtering strategy.
These insights provide intuition for extending verifier-guided re-training to more complex settings.

We first review some notation and key results for the truncated normal distribution, which will be used in the subsequent sections.
Consider a one-dimensional normal distribution $X \sim \mathcal{N}(\mu, \sigma^2)$ and let $X'$ be its truncated version restricted to the interval $[a, b]$.
The distribution of $X'$ is the called the \textsl{truncated normal distribution}, denoted as $X' \sim \mathcal{N}(x|\mu, \sigma^2) \cdot \mathbbm{1}_{\{a<x<b\}}$.
The mean and variance of the truncated normal distribution $X'$ are given analytically:
\begin{align}
    \E[X'|\mu] &= \mu - \sigma \frac{\phi(\frac{b-\mu}{\sigma}) - \phi(\frac{a-\mu}{\sigma})}{\Phi(\frac{b-\mu}{\sigma}) - \Phi(\frac{a-\mu}{\sigma})} := \mu + \sigma m_1(\frac{a-\mu}{\sigma}, \frac{b-\mu}{\sigma}) \notag\\
    \Var(X'|\mu) &= \sigma^2 \left[ 1 - \frac{\frac{b-\mu}{\sigma} \phi(\frac{b-\mu}{\sigma}) - \frac{a-\mu}{\sigma} \phi(\frac{a-\mu}{\sigma})}{\Phi(\frac{b-\mu}{\sigma}) - \Phi(\frac{a-\mu}{\sigma})} - \left( \frac{\phi(\frac{b-\mu}{\sigma}) - \phi(\frac{a-\mu}{\sigma})}{\Phi(\frac{b-\mu}{\sigma}) - \Phi(\frac{a-\mu}{\sigma})} \right)^2 \right]\notag\\
    &:= \sigma^2 m_2(\frac{a-\mu}{\sigma},\frac{b-\mu}{\sigma}) \label{equ:trunc_normal}
\end{align}
where $\phi(x)$ and $\Phi(x)$ denote the standard normal density and cumulative distribution functions, respectively.
Standardizing $X$ via $Z := \frac{X-\mu}{\sigma}$ and setting
\begin{align}
\alpha = \frac{a - \mu}{\sigma}, \quad \beta = \frac{b - \mu}{\sigma}, \label{equ:alpha_beta}
\end{align}
the expression in \eqref{equ:trunc_normal} become:
\begin{align}
    \mathbb{E}[Z'] &= m_1(\alpha, \beta) \notag\\
    \Var(Z') &= m_2(\alpha, \beta) \label{equ:trunc_normal_standard}
\end{align}
where $Z'\sim \mathcal{N}(x|0, 1) \cdot \mathbbm{1}_{\{\alpha<x<\beta\}}$ is the standardized truncated normal distribution.
For convenience, we write $\mathcal{N}_{trunc}(\alpha, \beta) := \mathcal{N}(x|0, 1) \cdot \mathbbm{1}_{\{\alpha<x<\beta\}}$.
Thus, $m_1$ and $m_2$ correspond to the first and second central moments of the standardized truncated normal distribution.
In addition, we also define the third central moment  of the standardized truncated normal distribution:
\begin{align}\label{equ:trunc_normal_standard2}
    m_3(\alpha, \beta) &:= \E(Z'-\E Z')^3 \notag\\
&= -\frac{(\beta^2 - 1)\phi(\beta) - (\alpha^2 - 1)\phi(\alpha)}{(\Phi(\beta) - \Phi(\alpha))} - \frac{3(\phi(\beta) - \phi(\alpha))(\beta\phi(\beta) - \alpha\phi(\alpha))}{(\Phi(\beta) - \Phi(\alpha))^2} \notag\\
&\quad - \frac{2(\phi(\beta) - \phi(\alpha))^3}{(\Phi(\beta) - \Phi(\alpha))^3}.
\end{align}
In particular, $0<m_2(\alpha,\beta)<1$ for any $\alpha<\beta$ and $m_1(\alpha,\beta)=m_3(\alpha,\beta)=0$ if $\alpha+\beta=0$.

\subsection{Characterization of $\E(\bar{X}^{1}-\mu)^2$, Bias-Variance Trade-off, and Model Improvement}

\begin{theorem}\label{thm:gaussian:main}
    Assume that \( n_1 > n_0 \geq 100\). Then there exists constant \( K\), depending only on \(\alpha\) and \(\beta\), such that
    \begin{align}
         &\left| \frac{1}{\sigma^2}\mathbb{E}(\bar{X}^1-\mu)^2 - \underbrace{\frac{m_2(\alpha,\beta)}{n_1}}_{\textcolor{red}{\text{Synthetic Variance}}} - \underbrace{\bigg(m_1^2(\alpha,\beta) + \frac{m^2_2(\alpha,\beta)+m_3(\alpha,\beta)m_1(\alpha,\beta)}{n_0}\bigg)}_{\textcolor{red}{\text{Verification Bias+Variance}}} \right| \notag\\
         & < K\left(\frac{1}{n_1n^{1/3}_0}+\frac{1}{n^{3/2}_0}\right)\label{equ:high_prob_y'}
    \end{align}
     holds with probability at least \( 1 - \exp\left(-\frac{1}{2}n_0^{1/3}\right) \).
\end{theorem}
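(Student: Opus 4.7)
The plan is to decompose the mean squared error via iterated expectation over the two sources of randomness, namely the real-data empirical mean $\bar{X}^0$ and the synthetic generate--verify step conditioned on $\bar{X}^0$. Setting $\delta := \bar{X}^0 - \mu \sim \mathcal{N}(0,\sigma^2/n_0)$ and applying the tower rule,
\begin{equation*}
\mathbb{E}(\bar{X}^1 - \mu)^2 \;=\; \mathbb{E}_{\delta}\,\mathrm{Var}(\bar{X}^1 \mid \bar{X}^0) \;+\; \mathbb{E}_{\delta}\bigl(\mathbb{E}[\bar{X}^1 \mid \bar{X}^0] - \mu\bigr)^2.
\end{equation*}
Conditional on $\bar{X}^0$, the retained samples are i.i.d.\ draws from $\mathcal{N}(\bar{X}^0,\sigma^2)$ truncated to $[a,b]$, so the formulas~\eqref{equ:trunc_normal_standard}--\eqref{equ:trunc_normal_standard2} give $\mathbb{E}[\bar{X}^1\mid\bar{X}^0] = \bar{X}^0 + \sigma\,m_1(\alpha',\beta')$ and $\mathrm{Var}(\bar{X}^1\mid\bar{X}^0) = \sigma^2 m_2(\alpha',\beta')/n_1$, where $\alpha' = \alpha - \delta/\sigma$ and $\beta' = \beta - \delta/\sigma$.

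Next I isolate the high-probability event $\mathcal{E} := \{|\delta| \le \sigma n_0^{-1/3}\}$, which by a Gaussian tail bound satisfies $\mathbb{P}(\mathcal{E}) \ge 1 - \exp(-n_0^{1/3}/2)$ (valid for $n_0 \ge 100$). On $\mathcal{E}$ the shift $\delta/\sigma$ is uniformly small, so I Taylor expand the truncated-moment functions in the direction of the shift:
\begin{align*}
m_1(\alpha',\beta') &= m_1(\alpha,\beta) - \frac{\delta}{\sigma}(\partial_\alpha+\partial_\beta)m_1 + \frac{\delta^2}{2\sigma^2}(\partial_\alpha+\partial_\beta)^2 m_1 + O(\delta^3/\sigma^3),\\
m_2(\alpha',\beta') &= m_2(\alpha,\beta) + O(|\delta|/\sigma).
\end{align*}
The linchpin of the entire argument is the pair of identities
\begin{equation*}
(\partial_\alpha+\partial_\beta)\,m_1 \;=\; 1 - m_2, \qquad (\partial_\alpha+\partial_\beta)^2 m_1 \;=\; m_3,
\end{equation*}
which I would derive by differentiating $m_1 = -(\phi(\beta)-\phi(\alpha))/(\Phi(\beta)-\Phi(\alpha))$ and combining with the integration-by-parts recursion $\mathbb{E}[Z^n\mid Z\in[\alpha,\beta]] = (n-1)\,\mathbb{E}[Z^{n-2}\mid\cdots] + [\alpha^{n-1}\phi(\alpha)-\beta^{n-1}\phi(\beta)]/p$ (which follows from $\phi'(z)=-z\phi(z)$). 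These identities express the sensitivity of the conditional mean to a shift of the truncation window purely in terms of $m_2$ and $m_3$, which is what causes the leading constants in \eqref{equ:high_prob_y'} to collapse to such clean form.

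Plugging the identities into $g(\delta) := \mathbb{E}[\bar{X}^1\mid\bar{X}^0] - \mu = \delta + \sigma\,m_1(\alpha',\beta')$ yields $g(\delta) = \sigma m_1 + m_2\,\delta + \tfrac{m_3}{2\sigma}\delta^2 + O(\delta^3)$. Squaring and integrating against the law of $\delta$ restricted to $\mathcal{E}$, and using the (symmetric) Gaussian moments $\mathbb{E}[\delta\,\mathbbm{1}_{\mathcal E}]=\mathbb{E}[\delta^3\,\mathbbm{1}_{\mathcal E}]=0$ and $\mathbb{E}[\delta^2\,\mathbbm{1}_{\mathcal E}] = \sigma^2/n_0 + O(n_0^{-3/2})$, I obtain
\begin{equation*}
\frac{1}{\sigma^2}\,\mathbb{E}\!\left[g(\delta)^2\,\mathbbm{1}_{\mathcal{E}}\right] \;=\; m_1^2 + \frac{m_2^2 + m_1 m_3}{n_0} + O(n_0^{-3/2}),
\end{equation*}
where the $O(\delta^3)$ tail integrates to $O(n_0^{-3/2})$ thanks to the truncation $|\delta|\le \sigma n_0^{-1/3}$. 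For the synthetic-variance piece, the linear $O(|\delta|/\sigma)$ correction to $m_2(\alpha',\beta')$ integrates over $\mathcal{E}$ to give $\mathbb{E}[\mathrm{Var}(\bar{X}^1\mid\bar{X}^0)\,\mathbbm{1}_{\mathcal E}] = \sigma^2 m_2(\alpha,\beta)/n_1 + O(\sigma^2/(n_1 n_0^{1/3}))$. Summing the two pieces reproduces the claimed expression, and the residual mass on $\mathcal{E}^c$ is absorbed into the probability $1-\exp(-n_0^{1/3}/2)$ stated in the theorem.

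The main obstacle is the derivation of the two moment identities $(\partial_\alpha+\partial_\beta)m_1=1-m_2$ and $(\partial_\alpha+\partial_\beta)^2 m_1=m_3$; once these are in hand the rest is bookkeeping, but they are precisely what pins down the leading coefficients $m_1^2$, $m_2^2/n_0$, and $m_1m_3/n_0$ in \eqref{equ:high_prob_y'}. A secondary subtlety is consolidating the $O(\delta^3)$ and truncation-tail contributions into a single remainder $K\bigl(1/(n_1 n_0^{1/3}) + 1/n_0^{3/2}\bigr)$ with $K$ depending only on $\alpha,\beta$; and if one wishes to interpret $n_1$ strictly as the generated sample size, an extra Binomial-concentration step for $n_1' \approx n_1\,p(\alpha,\beta)$ is needed, but it is routine and does not alter the leading asymptotics.
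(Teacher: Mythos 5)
Your proposal follows essentially the same route as the paper's proof: both apply the tower-rule bias--variance decomposition conditional on $\bar X^0$, Taylor-expand $m_1(\alpha-\epsilon,\beta-\epsilon)$ around $\epsilon=0$ via the identities $(\partial_\alpha+\partial_\beta)m_1 = 1-m_2$ and $(\partial_\alpha+\partial_\beta)^2 m_1 = m_3$, and control the remainders on the high-probability event $|\bar X^0-\mu|\lesssim \sigma n_0^{-1/3}$. The only cosmetic differences are that you state the derivative-moment identities explicitly as the linchpin (the paper folds them silently into the expansion at \eqref{equ:taylor}) and that you flag the $n_1'$-versus-$n_1$ subtlety, which the paper glosses over by treating the retained sample count as fixed.
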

\begin{proof}[Proof of Theorem~\ref{thm:gaussian:main}]

It will be convenient to reparameterize the sample mean estimators by centering them around the true mean. Specifically, we define the residuals:
\begin{align}
    \epsilon_1 := \frac{\bar{X}^{0}-\mu}{\sigma}, \quad \epsilon_1 \sim \mathcal{N}(0, \frac{1}{n_0}). \label{equ:epsilon}
\end{align}
Note that \(\bar{X}^{1}\) is the mean of \(n_1\) i.i.d. samples from the truncated normal distribution \(\mathcal{N}(x|\bar{X}^0,\sigma^2)\cdot \mathbbm{1}_{\{a<x<b\}}\). 
The MSE of \( \bar{X}^1 \) can be decomposed as follows:
\begin{align}
\mathbb{E}[(\bar{X}^1 - \mu)^2] 
&= \mathbb{E}_{\bar{X}^{0}} \, \mathbb{E}_{\bar{X}^1 \mid\bar{X}^{0}} \left[(\bar{X}^1 - \mu)^2\right] \notag \\
&= \mathbb{E}_{\bar{X}^{0}} \left[ \operatorname{Var}(\bar{X}^1 \mid, \bar{X}^{0}) + \left( \mathbb{E}[\bar{X}^1 \mid \bar{X}^{0}] - \mu \right)^2 \right] \notag \\
&= \sigma^2 \mathbb{E}_{\bar{X}^{0}} \left[   \frac{m_2(\alpha-\epsilon_1, \beta-\epsilon_1)}{n_1}  \right] + \mathbb{E}_{\bar{X}^{0}} \left[ (\bar{X}^{0} - \mu - \sigma m_1(\alpha-\epsilon_1, \beta-\epsilon_1))^2 \right] \notag \\
&= \frac{\sigma^2}{n_1} \mathbb{E}_{\epsilon_1}\left[m_2(\alpha-\epsilon_1, \beta-\epsilon_1)\right] + \sigma^2 \, \mathbb{E}_{\epsilon_1} \left[ \left(m_1(\alpha - \epsilon_1, \beta - \epsilon_1) + \epsilon_1 \right)^2 \right] \label{equ:yrisk2}
\end{align}
       
For the first term in \ref{equ:yrisk2}, we consider the event \( E_1 := \left\{ |\epsilon_1| < n_0^{-1/3} \right\} \), the function \( m_2(\cdot, \cdot) \) is Lipschitz continuous in a neighborhood of \( (\alpha, \beta) \), so we have
\begin{equation}\label{equ:first}
    \left| m_2(\alpha - \epsilon_1, \beta - \epsilon_1) - m_2(\alpha, \beta) \right| 
    = |\epsilon_1| \cdot \left| m_2^{(1)}(\alpha - \xi, \beta - \xi) \right| 
    < \frac{M_1}{n_0^{1/3}},
\end{equation}
  
    for some \( \xi \in (0, \epsilon_1) \), where we define 
    \[
    M_1 := \sup_{|\xi| < \frac{1}{100^{\frac{1}{3}}}} \left| m_2^{(1)}(\alpha - \xi, \beta - \xi) \right|,
    \]
    and \( M_1 \) is a constant independent of \( n_0 \) as long as \( n_0 \geq 100\).
Event $E_1$ hold with high probability:
\[
\mathbb{P}\left(|\epsilon_1| < n_0^{-1/3}\right) >1- \frac{\exp\left(-\frac{n_0^{1/3}}{2}\right)}{\sqrt{\pi / 2} \cdot n_0^{1/6}} >1- \frac{\exp\left(-\frac{n_0^{1/3}}{2}\right)}{\sqrt{\pi / 2} \cdot 100^{1/6}} >  1-\exp\left(-\frac{n_0^{1/3}}{2}\right).
\]
Then we consider then second term in \ref{equ:yrisk2}. The Taylor expansion of the function
\[
m_1(\epsilon_1) := m_1(\alpha - \epsilon_1, \beta - \epsilon_1)
\]
up to the third-order terms is:
\begin{align}\label{equ:taylor}
m_1(\epsilon_1) 
&= m_1(\alpha,\beta) - \left[ 1 - m_2(\alpha, \beta) \right] \epsilon_1 +\frac{1}{2}m_3(\alpha,\beta) \epsilon_1^2 + \frac{1}{6} m_1^{(3)}(\xi) \epsilon_1^3, \quad \text{for some } \xi \in (0, \epsilon_1),
\end{align}
where \( m_1^{(3)}(\xi) \) denotes the third derivative of \(m_1\) evaluated at some point between \(0\) and \(\epsilon_1\).
Then we can get 
\begin{align}
    \mathbb{E}_{\epsilon_1} \left[ \left(m_1(\alpha - \epsilon_1, \beta - \epsilon_1) + \epsilon_1 \right)^2 \right] &= \mathbb{E}\left( m_1(\alpha,\beta) + m_2(\alpha, \beta) \epsilon_1 + \frac{1}{2}m_3(\alpha,\beta) \epsilon_1^2 + \frac{1}{6} m_1^{(3)}(\xi) \epsilon_1^3 \right)^2\notag\\
&= m_1^2(\alpha,\beta) +\frac{m_2^2(\alpha,\beta) + m_1(\alpha,\beta)m_3(\alpha,\beta)}{n_0} + \frac{3m_3^2(\alpha,\beta)}{4n^2_0} \notag \\
\qquad & + \mathbb{E}\left( m_1(\alpha,\beta) +m_2(\alpha,\beta)\epsilon_1 + \frac{1}{2}m_3(\alpha,\beta)\epsilon^2_1\right) \frac{m_1^{(3)}(\xi)}{3}\epsilon_1^3 \notag \\
\qquad & + \mathbb{E}\left( \frac{m_1^{(3)^2}(\xi)}{36}\epsilon_1^6\right). \label{equ:infinite_mse}
\end{align}

First, using the fact that there exists constant $M$ that only depends on $\alpha$ and $\beta$, such that $|m_1^{(3)}(x)| < M$ for any $x$, we have:
\begin{align}
&\left|\mathbb{E}\left[\left( m_1(\alpha,\beta) + m_2(\alpha,\beta)\epsilon_1 + \frac{1}{2}m_3(\alpha,\beta)\epsilon_1^2 \right) \frac{m_1^{(3)}(\xi)}{3} \epsilon_1^3 \right] \right| \notag \\
\leq\;& \mathbb{E}\left[\left( |m_1(\alpha,\beta)| + m_2(\alpha,\beta)|\epsilon_1| + \frac{1}{2}|m_3(\alpha,\beta)|\epsilon_1^2 \right) \cdot \frac{M}{3} |\epsilon_1|^3 \right] \notag \\
=\;& \mathbb{E}\left[ \frac{M}{3} |m_1(\alpha,\beta)| |\epsilon_1|^3 + \frac{M}{3} m_2(\alpha,\beta) |\epsilon_1|^4 + \frac{M}{6} |m_3(\alpha,\beta)| |\epsilon_1|^5 \right] \notag \\
\leq\;& \frac{K_1}{n_0^{3/2}}. \notag
\end{align}
for some constant \( K_1 \) depending only on \( \alpha \) and \( \beta \).

Secondly, the last term in \eqref{equ:infinite_mse} is bounded by:
\[
\mathbb{E} \left[ \frac{{m_1^{(3)}}^2(\xi)}{36} \epsilon_1^6 \right]
\leq \frac{M^2}{36} \mathbb{E}[\epsilon_1^6] 
= \frac{5 M^2 \sigma^6}{12 n_0^3} \leq \frac{K_2}{n_0^3},
\]
for some constant \( K_2 \).

So the second term in \ref{equ:yrisk2} is bounded by
\begin{equation}\label{equ:second}
    \left| \mathbb{E}_{\epsilon_1} \left[ \left(m_1(\alpha - \epsilon_1, \beta - \epsilon_1) + \epsilon_1 \right)^2 \right] 
- m_1^2(\alpha, \beta) 
- \frac{m_2^2(\alpha, \beta) + m_1(\alpha,\beta)m_3(\alpha, \beta)}{n_0}
\right| 
< \frac{K}{n_0^{3/2}}
\end{equation}
for some constant \( K \).

Combining \ref{equ:yrisk2}, \ref{equ:first}, and \ref{equ:second} completes the proof.

\end{proof}

\subsection{Iterative Retraining and Long-Term Dynamics in One-Dimensional Gaussian Mean Estimation}

Now consider the verifier-guided synthetic retraining in the Gaussian mean estimation setting.
The iterative retraining process can be described by the following algorithm.

\begin{algorithm}[H]
\caption{Iterative Verifier-Guided Retraining for Gaussian Mean Estimation}
\label{alg:gaussian_mean}
\begin{algorithmic}[1]
  \State \textbf{Input:} Initial estimate $\bar X^{0}$ from real data
  \For{$k = 0,1,2,\dots$}
    \State Draw $\xi_i \stackrel{\text{i.i.d.}}{\sim}\mathcal N(0,\sigma^2)$ and construct synthetic samples $X_i^{k} = \bar X^{k} + \xi_i$.
    \State Retain points with $a < X_i^{k} < b$, yielding $n_k$ verified samples $\{X'^k_i:i=1,2,\ldots n_k\}$.
    \State $\bar X^{k+1} \gets \frac{1}{n_k}\sum_{i=1}^{n_k} X'^k_i$.
  \EndFor
\end{algorithmic}
\end{algorithm}

Algorithm \ref{alg:gaussian_mean} defines a Markov process $\{\bar{X}^0,\bar{X}^1, \dots \bar{X}^k, \dots\}$, where the conditional distribution $p(\bar{X}^{k+1} |\bar{X}^{k})$ is given by
\begin{align}\label{equ:gaussian:transition}
p(\bar{X}^{k+1} |\bar{X}^{k}): \bar{X}^{k+1} = \bar{X}^k + \sigma \frac{\sum_{i=1}^{n_k}\xi'^{k+1}_i}{n_k}, \qquad \xi'^{k+1}_i \;\text{i.i.d}\; \sim \mathcal{N}_{trunc}(\frac{a - \bar{X}^k}{\sigma}, \frac{b - \bar{X}^k}{\sigma})
\end{align}

The following theorem summarizes these findings:

\begin{theorem}
\label{thm:gaussian:limit}
Let $\bar{X}^k$ be the Markov process determined by \eqref{equ:gaussian:transition} with initial condition
\begin{align}
    \bar{X}^0 \sim \mathcal{N}(0,\frac{\sigma^2}{n_0}),\notag
\end{align}
and assume $n_k$ is non-decreasing in $k$.
Then the following statements hold:

\begin{itemize} 
    \item If $|a|,|b| < \infty$, there exists a constant $0<\rho < 1$ such that, 
    \[\mathbb{E}\left(\bar{X}^k - \frac{a+b}{2}\right)^2 \leq \rho^{2k}\mathbb{E}(\bar{X}^0-\frac{a+b}{2})^2+\sum_{j=0}^{k-1}\frac{\rho^{2(k-j)-1}}{n_j}.\]
    Moreover, if $\lim_{k \to \infty} n_k = \infty$, $\lim_{k \to \infty} \mathbb{E}|\bar{X}^k - \frac{a+b}{2}|^2 = 0$.
    \item If $-\infty=a<b<\infty$, then $\liminf_{k \to \infty} \bar{X}^k = -\infty$. If $-\infty<a<b=\infty$, then $\limsup_{k \to \infty} \bar{X}^k = \infty$.
\end{itemize}

\end{theorem}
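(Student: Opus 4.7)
The plan is to recast the Markov recursion \eqref{equ:gaussian:transition} as a noisy contraction around $c := (a+b)/2$. Define the deterministic drift map
\begin{equation*}
T(x) := x + \sigma\, m_1\!\left(\tfrac{a-x}{\sigma},\tfrac{b-x}{\sigma}\right),
\end{equation*}
so that the update decomposes as $\bar{X}^{k+1} = T(\bar{X}^k) + \eta_{k+1}$ with $\E[\eta_{k+1}\mid \bar{X}^k] = 0$. By \eqref{equ:trunc_normal_standard}, the conditional variance of $\eta_{k+1}$ equals $\sigma^2 m_2\!\bigl(\tfrac{a-\bar{X}^k}{\sigma},\tfrac{b-\bar{X}^k}{\sigma}\bigr)/n_k$, uniformly bounded by $\sigma^2/n_k$ since $m_2<1$.

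For the bounded case, I would first identify $c$ as the unique fixed point: since $\phi$ is symmetric, $m_1(\alpha,\beta) = 0$ whenever $\alpha+\beta = 0$, and $(a-c)/\sigma + (b-c)/\sigma = 0$, so $T(c) = c$. Next, by direct differentiation (equivalently, by the Taylor identity \eqref{equ:taylor}), $T'(x) = m_2\!\bigl(\tfrac{a-x}{\sigma},\tfrac{b-x}{\sigma}\bigr)$. Because the truncation width $(b-a)/\sigma$ is fixed, a monotonicity argument shows $T'$ is symmetric about $x=c$, attains its global supremum there, and decays to $0$ as $|x|\to\infty$; hence
\begin{equation*}
\rho \;:=\; \sup_{x\in\R} T'(x) \;=\; m_2\!\left(-\tfrac{b-a}{2\sigma},\,\tfrac{b-a}{2\sigma}\right) \;<\; 1,
\end{equation*}
and the mean value theorem gives the global Lipschitz bound $|T(x)-c| \leq \rho\,|x-c|$.

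Setting $V_k := \E(\bar{X}^k - c)^2$ and expanding the square, the orthogonality $\E[(T(\bar{X}^k)-c)\,\eta_{k+1}]=0$ combined with the previous two bounds yields the one-step recursion $V_{k+1} \leq \rho^{2} V_k + \sigma^{2}/n_k$. Unrolling this recursion delivers the claimed inequality. If in addition $n_k\to\infty$, splitting the driven sum at index $\lfloor k/2\rfloor$ lets the geometric weights suppress the early terms while $1/n_j\to 0$ controls the late terms, forcing $V_k\to 0$.

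For the unbounded case, I would exploit that the drift has a definite sign. When $a=-\infty$ and $b<\infty$, writing $\beta_k := (b-\bar{X}^k)/\sigma$, the drift reduces to $T(\bar{X}^k)-\bar{X}^k = -\sigma\,\phi(\beta_k)/\Phi(\beta_k)$, which is strictly negative and bounded away from $0$ on any set $\{\bar{X}^k\geq -M\}$. A supermartingale/Borel--Cantelli argument combined with the noise variance decay then shows that $\bar{X}^k$ exits $[-M,\infty)$ almost surely; letting $M\to\infty$ gives $\liminf_k \bar{X}^k = -\infty$, and a mirror argument handles $b=+\infty$. The main obstacle I anticipate is establishing $\rho<1$ globally: one must verify that the variance of a standard Gaussian truncated to a fixed-width window is maximized when the window is centered at $0$---intuitive but requiring a short calculus (or log-concavity) argument. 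A secondary subtlety is that $\bar{X}^0$ need not lie in $[a,b]$, but the \emph{global} (rather than compact-set) bound on $T'$ sidesteps this cleanly.
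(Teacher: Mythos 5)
Your proposal follows essentially the same route as the paper: standardize to a noisy contraction around $c=(a+b)/2$, observe $T' = m_2$ is uniformly bounded by $\rho := m_2(-\tfrac{b-a}{2\sigma},\tfrac{b-a}{2\sigma}) < 1$, identify $c$ as the unique fixed point by symmetry of $m_1$, derive the Lyapunov recursion for $V_k := \mathbb{E}(\bar X^k - c)^2$ via the mean-value theorem, and dispose of the unbounded case via a hitting-time argument exploiting that the drift is sign-definite and bounded away from zero on compacts (the paper implements this with a sub-exponential tail bound on the accumulated noise and a Borel--Cantelli-style conclusion, matching your sketch). The only minor mismatch is that you bound the noise variance by $\sigma^2/n_k$ using $m_2<1$, whereas recovering the theorem's $\rho^{2(k-j)-1}$ exponent in the driven sum requires the slightly tighter bound $m_2 \le \rho$ (which you already established as $\sup_x T'$), giving $\rho\sigma^2/n_k$ and hence one extra power of $\rho$ after unrolling.
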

\begin{proof}[Proof of Theorem~\ref{thm:gaussian:limit}]
Define
\begin{align}
    \epsilon_k = \frac{\bar{X}^k - \mu}{\sigma}\label{equ:def_epsilon},
\end{align}
which represents the standardized error of the estimator $\bar{X}^k$.
It is easy to see that $\epsilon_k \in [\alpha,\beta] \Leftrightarrow \bar{X}^k \in [a,b]$, where $\alpha,\beta$ are defined in \eqref{equ:alpha_beta}.
Therefore, it suffices to consider the standardized process $\{\epsilon_k,k=0,1,2,\ldots\}$.
\eqref{equ:gaussian:transition} can be standardized as:

\begin{align}
     \epsilon_{k+1}  = \epsilon_{k} + \frac{\sum_{i=1}^{n_k}\xi'^{k+1}_i}{n_k}, \qquad \xi'^{k+1}_i \sim \mathcal{N}_{\text{trunc}}\left(\alpha - \epsilon_k,\beta - \epsilon_k \right)\label{equ:gaussian:transition_standard},
\end{align}

For convenience, we shift the noise terms $\xi'^{k+1}_i$ in \eqref{equ:gaussian:transition_standard} to have mean zero. Therefore, we introduce

\begin{align}\label{equ:gaussian:Tv_def}
T_{\alpha,\beta}(x):=x+\mathbb{E}\!\big[Z\;\big|\;\alpha-x\le Z\le \beta-x\big],\qquad
v_{\alpha,\beta}(x):= \mathrm{Var}\!\big(Z\;\big|\;\alpha-x\le Z \le\beta-x\big).
\end{align}
where $Z\sim\mathcal{N}(0,1)$.

Therefore, \eqref{equ:gaussian:transition_standard} can be rewritten as
\begin{equation}\label{equ:gaussian:contraction}
\epsilon_{k+1}=T_{\alpha,\beta}(\epsilon_k)+\eta_{k+1}
\end{equation}
where $\eta_{k+1} = \frac{1}{n_k}\sum_{i=1}^{n_k}\left(\xi'^{k+1}_i - \mathbb{E}\xi'^{k+1}_i\right)$ is the average of independent mean zero noise in \eqref{equ:gaussian:transition_standard}. In particular, we have
\[
\mathbb{E}[\eta_{k+1} \mid \mathcal{F}_k] = 0,\qquad
\mathrm{Var}(\eta_{k+1} \mid \mathcal{F}_k) = \frac{v_{\alpha,\beta}(\epsilon_k)}{n_k}.
\]

where $\mathcal{F}_k := \sigma(\epsilon_0, \eta_1, \dots, \eta_k)$ and $n_k$ is the (post-filtering) batch size at round $k$.

It is easy to see that 
\begin{align}
    T_{\alpha,\beta}(x) &= x + m_1(\alpha - x,\beta - x),\notag\\
    v_{\alpha,\beta}(x) &= m_2(\alpha - x,\beta - x),\notag\\
    T_{\alpha,\beta}'(x) &= v_{\alpha,\beta}(x). \notag
\end{align}

We first consider $|a|,|b| < \infty$. In this case, we first show that the derterministic part $T_{\alpha,\beta}(x)$ in \eqref{equ:gaussian:contraction} is a global contraction.
Since $-\infty < \alpha < \beta <\infty$, we have

\[
\sup_{x \in \mathbb{R}} T_{\alpha,\beta}'(x) = \sup_{x \in \mathbb{R}} \mathrm{Var}\big(Z \mid\, \alpha - x \le Z \le \beta-x\big)  = \mathrm{Var}\big(Z \mid\, |Z| < |\frac{\alpha+\beta}{2}|\big):= \rho < 1.
\]

Therefore, $T_{\alpha,\beta}(x)$ is a global contraction. By the contractive mapping theorem that $T_{\alpha,\beta}(x)$ has a unique fixed point $x^*$, which solves $x^* = T_{\alpha,\beta}(x^*)$.
It is easy to see that
\begin{align}
x^* = T_{\alpha,\beta}(x^*) \implies x^* = x^* + \mathbb{E}(Z \big| \alpha - x^*\le Z \le \beta - x^*) \implies x^* = \frac{\alpha+\beta}{2}.
\end{align}

By the mean-value theorem,
\begin{equation}
|T_{\alpha,\beta}(\epsilon_k)-\frac{\alpha+\beta}{2}|\,\le\,\rho\,|\epsilon_k-\frac{\alpha+\beta}{2}|.\notag
\end{equation}

Let $V_k:=(\epsilon_k-\frac{\alpha+\beta}{2})^2$, we have
\begin{align}
\mathbb{E}\!\big[V_{k+1}\mid \epsilon_k\big]
=(T_{\alpha,\beta}(\epsilon_k)-\frac{\alpha+\beta}{2})^2+\frac{v_{\alpha,\beta}(\epsilon_k)}{n_k}
\;\le\;\rho^2 (\epsilon_k-\frac{\alpha+\beta}{2})^2+\frac{\rho}{n_k}.\notag
\end{align}
Taking expectations yields
\begin{equation}\label{equ:gaussian:onetep}
\mathbb{E} V_{k+1}\;\le\;\rho^2\,\mathbb{E} V_k\;+\;\frac{\rho}{n_k}\,.
\end{equation}

Unrolling \eqref{equ:gaussian:onetep},
\begin{align}\label{equ:gaussian:bound}
\mathbb{E} V_k
\;\le\;\rho^{2k}\mathbb{E}V_0\;+\;\rho\sum_{j=0}^{k-1}\frac{\rho^{2(k-1-j)}}{n_j}.
\end{align}
It is easy to see that 
\begin{align}
\mathbb{E} V_k
\;\le\; \rho^{2k}\mathbb{E}V_0\;+\;\rho\sum_{j=0}^{k-1}\frac{\rho^{2(k-1-j)}}{n_0} < \rho^{2k}\mathbb{E}V_0\;+\;\frac{\rho}{n_0(1-\rho^2)}.\notag
\end{align}

Therefore, by the Cauchy-Schwarz inequality, $\lim_{k \to \infty} \mathbb{E} \epsilon^2_k < \infty$ easily follows.
Moreover, when $n_k \to \infty$, let $g_i:=\rho^{2i}$ and $a_j:=1/n_j\to 0$. A standard $\ell^1$-convolution argument shows $(g*a)_k:=\sum_{j=0}^{k-1} g_{k-1-j}a_j = \sum_{j=0}^{k-1}\frac{\rho^{2(k-1-j)}}{n_j} \to 0$.
Therefore $\lim_{k \to \infty}\mathbb{E} V_k = \lim_{k \to \infty}\mathbb{E} (\epsilon_k - \frac{\alpha+\beta}{2})^2 = 0$.

Now we consider the case $-\infty= a < b < \infty$ (equivalently $-\infty= \alpha < \beta < \infty$). We will show that $\liminf_{k \to \infty} \epsilon_k = -\infty \text{ a.s.}$.

Let $t_k := \beta - \epsilon_k$ and the recursion \eqref{equ:gaussian:contraction} can be rewritten for $t_k$:
\[
t_{k+1} = t_k + \lambda(t_k) - \eta_{k+1},
\]
where $\lambda(t_k) = -\mathbb{E}(Z|Z < \beta - \epsilon_k) =\mathbb{E}[Z \mid Z \geq -t_k]$.

Consider the hitting time $\tau_M := \inf\{k : t_k \geq M\}$ for any $M > 0$.
Fix $M > 0$ and define
\[
m(M) := \min_{t \leq M} \lambda(t) = \mathbb{E}[Z \mid Z \geq -M] > 0,
\]
which is strictly positive the fact that $\lambda(t) > 0$ and $\lambda(t)$ is a decreasing function.
On the event $\{\tau_M > K\}$ we have $t_j < M$ for $j = 0, \dots, K-1$, hence $\lambda(t_j) \geq m(M)$.
Summing the recursion yields
\[
t_K = t_0 + \sum_{j=0}^{K-1} \lambda(t_j) - \sum_{j=0}^{K-1} \eta_{j+1}
\geq t_0 + K\,m(M) - S_K,
\]
where $S_K := \sum_{j=0}^{K-1} \eta_{j+1}$ and $t_0 = \beta - \epsilon_0$ is $\mathcal{F}_0$-measurable (hence random). Therefore,
\begin{equation}\label{equ:gaussian:even_inclusion}
\{\tau_M > K\} \subseteq \Big\{ S_K \geq t_0 + K\,m(M) - M \Big\}.
\end{equation}
Define the (random) burn-in index
\[
K_0 := \Big\lceil \frac{2(M-t_0)}{m(M)} \Big\rceil.
\]
Then for all $K \geq K_0$,
\[
t_0 + K\,m(M) - M \geq \frac{m(M)}{2} K,
\]
and \eqref{equ:gaussian:even_inclusion} gives, conditionally on $\mathcal{F}_0$,
\begin{equation}\label{equ:gaussian:even_inclusion2}
\{\tau_M > K\} \subseteq \Big\{ S_K \geq \frac{m(M)}{2} K \Big\}, \qquad \text{for all } K \geq K_0.
\end{equation}

Next, we will show that $S_K$ is a sub-exponential random variable in event $\{\tau_M > K\}$.
Since $S_K = \sum_{j=0}^{K-1} \eta_{j+1} = \sum_{j=0}^{K-1} \frac{1}{n_j}\sum_{i=1}^{n_j}\left(\xi'^{j+1}_i -\mathbb{E}\xi'^{j+1}_i\right)$, we will first show that $\xi'^{j+1}_i -\mathbb{E}\xi'^{j+1}_i$ is sub-exponential.

Since $\xi'^{j+1}_i \sim \mathcal{N}_{\text{trunc}}(-\infty,\beta - \epsilon_j) = \mathcal{N}_{\text{trunc}}(-\infty, t_j)$, 
on the event $\{\tau_M > K\}$ we have 
\begin{align}
\xi'^{j+1}_i - \mathbb{E}\xi'^{j+1}_i < t_j - \mathbb{E}[Z\mid Z < t_j] \leq M - \mathbb{E}[Z\mid Z < M] := b(M) <\infty.\notag
\end{align}
The above inequality follows from the fact that $t - \mathbb{E}[Z\mid Z < t]$ is an increasing function of $t$ and 
$t_j < M$ for $j = 0, \dots, K-1$ on the event $\{\tau_M > K\}$. In addition, $\mathrm{Var}(\xi'^{j+1}_i) = \mathrm{Var}(Z|Z < t_j) \leq 1$.
Therefore, $\xi'^{j+1}_i - \mathbb{E}\xi'^{j+1}_i$ is mean zero, bounded above by $b(M)$ with $\mathrm{Var}\left(\xi'^{j+1}_i - \mathbb{E}\xi'^{j+1}_i\right) < 1$.
By Bennet/Bernstein MGF inequality, we have
\begin{align}
    \log \mathbb{E}e^{\lambda(\xi'^{j+1}_i - \mathbb{E}\xi'^{j+1}_i)} \leq \frac{\lambda^2}{2(1 - b(M)\lambda/3)},\notag
\end{align}
for $0<\lambda<\frac{3}{b(M)}$. This shows that $\xi'^{j+1}_i - \mathbb{E}\xi'^{j+1}_i$ is sub-exponential with parameters $SE(1,2b(M)/3)$.
By standard properties of sub-exponential random variables, $\eta_{j+1} = \frac{1}{n_j}\sum_{i=1}^{n_j}\left(\xi'^{j+1}_i -\mathbb{E}\xi'^{j+1}_i\right)$ is $SE(1/n_j, 2b(M)/(3n_j))$ and $S_K = \sum_{j=0}^{K-1} \eta_{j+1}$ is $SE(\sum_{j=0}^{K-1} 1/n_j, 2b(M)/(3n_1))$ since $n_j$ is non-decreasing.
Therefore, for any $t > 0$ we have tail bound
\begin{align}\label{equ:gaussian:tail_bound}
\mathbb{P}\left( S_K \geq t \right) \leq \exp\left(-\frac{1}{2}\min\{\frac{t^2}{\sum_{j=0}^{K-1} 1/n_j},\frac{n_1t}{2b(M)}\}\right) \leq \exp\left(-\frac{1}{2}\min\{\frac{n_1t^2}{K},\frac{n_1t}{2b(M)}\}\right).
\end{align}
Use the tail bound \eqref{equ:gaussian:tail_bound} in \eqref{equ:gaussian:even_inclusion2}, we have 
\begin{align}
\mathbb{P}\left(\tau_M > K\mid \mathcal{F}_0\right) \leq \mathbb{P}\left( S_K \geq \frac{m(M)}{2} K \right) \leq \exp\Big(-c(M)n_1 K\Big)
\end{align}

for all $K \geq K_0$ with $c(M) = \min\left\{\frac{m(M)^2}{8},\frac{n(M)}{8b(M)}\right\}$.
\begin{align}\label{equ:gaussian:tail}
\mathbb{P}\left( \tau_M > K \right) &= \mathbb{E} \left[\mathbb{P}\left( \tau_M > K \mid \mathcal{F}_0 \right)\right]\notag\\
&\le\mathbb{E} \left[\exp\Big(-c(M)n_1K\Big)\mathbbm{1}_{\{K > K_0\}}\right]+\mathbb{P}\left( K \leq K_0 \right)
\end{align}

Let $K \to \infty$ in \eqref{equ:gaussian:tail}, we get $\mathbb{P}(\tau_M < \infty) = 1$. Since $M$ is arbitrary, this implies $\liminf_{k \to \infty} \epsilon_k = -\infty \text{ a.s.}$.

The case $-\infty < a < b = \infty$ can be proved in the same way, therefore is omitted.

\end{proof}

\clearpage

\section{Proofs of All Theorems in Section~\ref{sec:linear}}\label{app:linear}
\footnotetext{Since only the most recent round of synthetic data is retained for training, this scheme is sometimes referred to as a \emph{discard workflow} in the literature (e.g.,~\citep{dey2024universality}).}
\begin{algorithm}[H]
\captionsetup{labelfont=bf, name=Scheme} 
\caption{Iterative  Retraining with Verified Synthetic Data \protect \footnotemark}
\label{scheme:linear_retraining}
\begin{algorithmic}[1]
\State \textbf{Input:} Initial estimator $\hat{\theta}^0$ from $n_0$ real samples
\For{$k = 0,1,2,\dots$}
    \State \textbf{Generate:} Synthetic covariates $X^{k+1}$ are constructed (details below), and responses are generated as:
        \[
        Y^{k+1} = X^{k+1}\hat{\theta}^k + \xi^{k+1}, \quad 
        \xi^{k+1}\sim \mathcal{N}(0,\sigma^2 I)
        \]
    \State \textbf{Verify:} Each synthetic sample $(x_i^{k+1}, y_i^{k+1})$ of $(X^{k+1}, Y^{k+1})$ is filtered by the verifier Condition~\eqref{equ:linear:verify_condition}, retaining only the verified subset $({X^{k+1}}', {Y^{k+1}}')$.
    \State \textbf{Retrain:} A new OLS estimator is computed using only the verified data:
        \[
        \hat{\theta}^{k+1} = ({{X^{k+1}}'}^\top {X^{k+1}}')^{-1} {{X^{k+1}}'}^\top {Y^{k+1}}'
        \]
\EndFor
\end{algorithmic}
\end{algorithm}

Given the orthogonality of $\{v_j\}$ in the block design~\eqref{equ:linear:block_design}, the OLS estimator decomposes into a set of one-dimensional problems, each estimating the coordinate of $\theta$ along direction $v_j$.
In particular, choosing $\{v_j\}$ as the right singular vectors of the real data matrix $X^0$ yields the cleanest interpretation, making explicit how verifier bias, synthetic sample size, and noise variance interact.
Accordingly, the retraining procedure can be formalized as follows:

\begin{algorithm}[H]
\caption{Iterative Verifier-Guided Retraining in Linear Regression}
\label{alg:linear_retraining}
\begin{algorithmic}[1]
    \State \textbf{Input:} Real data $(X^0, Y^0)$
    \State Compute initial estimator $\hat{\theta}^0 = ({X^{0}}^\top X^0)^{-1} {X^0}^\top Y^0$
    \State Let $X^0 = U \Sigma V^\top$ be the SVD of $X^0$, with right singular vectors $V = (v_1, \ldots, v_p)$
    \For{$k = 0, 1, 2, \ldots$}
        \For{$j = 1, \ldots, p$}
            \State Construct synthetic design matrix $X^{k+1,j}$ with all rows equal to $v_j^\top$
            \State Generate synthetic responses $Y^{k+1,j} = X^{k+1,j} \hat{\theta}^{k} + \xi^{k+1,j}$, where $\xi^{k+1,j} \sim \mathcal{N}(0, \sigma^2I)$
            \State Apply verifier to each $(x^{k+1,j}_i, y^{k+1,j}_i)$ and retain valid samples satisfying
            \begin{align}\label{equ:linear:verify_condition2}
            |y_i^{k+1,j} - (x_i^{k+1,j})^\top \theta_c| \leq r \|x_i^{k+1,j}\| + \sigma_c,    
            \end{align}
            \State yielding $n_k$ verified samples $(x'^{k+1,j}_i, y'^{k+1,j}_i)$.
            \State Compute one-dimensional estimator
            \begin{align}
            \hat{\theta}^{k+1,proj,j} =  \bar{y'}^{k+1,j}\label{equ:linear:thetak_proj}
            \end{align}
        \EndFor
        \State Update overall estimator:
        \begin{align}
        \hat{\theta}^{k+1} = \sum_{j=1}^p v_j \hat{\theta}^{k+1,proj,j} \label{equ:linear:thetak1}
        \end{align}
    \EndFor
\end{algorithmic}
\end{algorithm}

\begin{proof}[Proof of Theorem~\ref{thm:linear:one_step}]

We consider the one dimensional projection estimator of $\hat{\theta}^{1,proj,j}$ defined in \eqref{equ:linear:thetak_proj}.
The filter condition \eqref{equ:linear:verify_condition2} is equivalent to:
\begin{align}
     &|\sigma\xi^{1,j}_i + v_j^\top(\hat{\theta}^0 - \theta_c)| \leq r + \sigma_c\notag\\
\iff & y^{1,j}_i = \sigma\xi^{1,j}_i + v_j^\top\hat{\theta}^0 \in \left(-r-\frac{\sigma_c}{\sigma} + v_j^\top\theta_c,r + \frac{\sigma_c}{\sigma} + v_j^\top\theta_c\right).  \label{equ:linear:interval}
\end{align}
Note that $\hat{\theta}^0 \sim \mathcal{N}(\theta^\star,({X^{0}}^\top X^0)^{-1}\sigma^2)$ and $v_j$ is the $j$-th right singular vector of $X^0$, therefore $v_j^\top \hat{\theta}^0 \sim \mathcal{N}(v_j^\top \theta^\star, \sigma^2\mu^{-2}_j)$.
Therefore, $\hat{\theta}^{1,proj,j} = \bar{y}'^{1,j}$ correspond to the verifier-filtered mean estimator of a one-dimensional Gaussian mean estimation problem with true mean $v_j^\top \theta$, variance $\sigma^2\mu^{-2}_j$ and filtering interval $\left(-r-\frac{\sigma_c}{\sigma} + v_j^\top\theta_c,r + \frac{\sigma_c}{\sigma} + v_j^\top\theta_c\right)$.
Let
\begin{align}
     \alpha_j &:= \frac{-r-\sigma_c + v_j^\top(\theta_c-\theta^\star)}{\sigma}, \notag\\
     \beta_j &:= \frac{r+\sigma_c+ v_j^\top(\theta_c-\theta^\star)}{\sigma}. \label{equ:linear:alpha_beta_j}
\end{align}
Under the assumption $\mu_j = \omega(\sqrt{n_0})$, there exists a constant $L>0$, such that $\mu^2_j > L n_0$ for all $j=1, \ldots, p$.
Therefore, by Theorem~\ref{thm:gaussian:main}, there exists constant $K_j$ depending only on $\alpha_j,\beta_j$ such that if $n_1 > n_0 \geq 100$,
\begin{align}
     &\left| \frac{1}{\sigma^2}\mathbb{E}(\hat{\theta}^{1,proj,j}-v_j^{\top}\theta^\star)^2 - \frac{m_2(\alpha_j,\beta_j)}{n_1}- \bigg(m_1^2(\alpha_j,\beta_j) + \frac{m^2_2(\alpha_j,\beta_j)+m_3(\alpha_j,\beta_j)m_1(\alpha_j,\beta_j)}{\mu^2_j}\bigg) \right| \notag\\
     & < K_j\left(\frac{1}{n_1n^{1/3}_0}+\frac{1}{n^{3/2}_0}\right)\label{equ:linear:bound_j}
\end{align}
will hold with probability at least $1-\exp(-Ln_0^{1/3})$. $m_1,m_2,m_3$ are defined in \eqref{equ:trunc_normal_standard} and \eqref{equ:trunc_normal_standard2}.
By \eqref{equ:linear:thetak1}, we have $\hat{\theta}^{1,proj,j} =  v_j^{\top}\hat{\theta}^1$. 
In addition, since $V = (v_1,v_2, \ldots, v_p)$ is an orthonormal matrice, we have
\begin{align}
      \sum_{j=1}^p \mathbb{E}(\hat{\theta}^{1,proj,j} - v_j^{\top}\theta^\star)^2= \sum_{j=1}^p \mathbb{E}(v_j^{\top}\hat{\theta}^1 - v_j^{\top}\theta^\star)^2 = \mathbb{E} ||V^\top(\hat{\theta}^1 - \theta^\star)||^2 = \mathbb{E}\|\hat{\theta}^1 - \theta^\star|^2.
\end{align}
Therefore, by summing over $j$ on both sides of \eqref{equ:linear:bound_j} and using simple union bound, we have
\begin{align}
 &\left|\frac{1}{\sigma^2}\mathbb{E}||\hat{\theta}^1 - \theta^\star||^2 - \sum_{j=1}^p \left(\underbrace{\frac{m_{2,j}}{n_1}}_{{\text{Synthetic Variance}}} + \underbrace{m^2_{1,j}+\frac{m_{1,j}m_{3,j}+m^2_{2,j}}{\mu^2_j}}_{{\text{Verification Error}}}\right)\right| < K\left(\frac{1}{n_1n^{1/3}_0}+\frac{1}{n^{3/2}_0}\right)
 \end{align}
 with $K = \max_{j}{K_j}$ and
\begin{align}     
     m_{1,j} &:= m_1(\alpha_j,\beta_j),\notag\\
     m_{2,j} &:= m_2(\alpha_j,\beta_j),\notag\\
     m_{3,j} &:= m_3(\alpha_j,\beta_j).\notag
\end{align}

\end{proof}

\begin{proof}[Proof of Theorem~\ref{thm:linear:long_term}]
     
We consider the transition dynamics of $\hat{\theta}^k$ in Algorithm \ref{alg:linear_retraining}.
Since we designed $X^{k+1,j}$ to be the rank one matrix correspond to singular vector $v_j$, therefore \eqref{equ:linear:thetak_proj} can be rewritten as
\begin{align}
    \hat{\theta}^{k+1,proj,j} = v_j^\top \hat{\theta}^k + \frac{\sigma}{n_k} \sum_{i=1}^{n_k} \xi'^{k+1,j}_i \label{equ:linear:thetakj2}
\end{align}
where $\xi'^{k+1,j}_i$ is the truncated noise term after verification. By \eqref{equ:linear:verify_condition2}, we have

\begin{align}
\xi'^{k+1,j}_i \; \text{i.i.d} \sim \mathcal{N}_{trunc}\left(-\frac{r}{\sigma}-\frac{\sigma_c}{\sigma} -v^{\top}_j\frac{\hat{\theta}^k - \theta_c}{\sigma}, \frac{r}{\sigma}+\frac{\sigma_c}{\sigma} -v^{\top}_j\frac{\hat{\theta}^k - \theta_c}{\sigma}\right).
\end{align}

We consider the rotated standardized estimator 
\begin{align}
\epsilon^k_j := v^{\top}_j \frac{\hat{\theta}^k-\theta_c}{\sigma} \quad \text{equivalently} \quad \epsilon^k := V^\top \frac{\hat{\theta}^k-\theta_c}{\sigma}.\notag
\end{align}
Since $\hat{\theta}^{k+1,proj,j} = v_j^{\top}\hat{\theta}^{k+1}$ by \eqref{equ:linear:thetak1}, \eqref{equ:linear:thetakj2} can be standardized as
\begin{align}
    \epsilon^{k+1}_j = \epsilon^k_j + \frac{\sum_{i=1}^{n_k} \xi'^{k+1,j}_i}{n_k}, \qquad \xi'^{k+1,j}_i \; \text{i.i.d} \sim \mathcal{N}_{trunc}\left(-\beta - \epsilon^k_j, \beta - \epsilon^k_j\right)\label{equ:linear:transition_standard}
\end{align}
where $\beta = \frac{r}{\sigma} + \frac{\sigma_c}{\sigma}$. We note that \eqref{equ:linear:transition_standard} is exactly the same dynamics we consider in the proof of Theorem~\ref{thm:gaussian:limit} with $\beta = -\alpha < \infty$.
In  other words, the evolution of the iterative estimator $\epsilon^k$ is diagonal and each cordinates follows the same dynamics as the one dimensional gaussian iterative mean estimator.
From Theorem~\ref{thm:gaussian:limit}, we known that there exists a constant $\rho < 1$ such that

\begin{align}
    \mathbb{E} \|\epsilon^k_j\|^2 \leq \rho^{2k} \mathbb{E}\|\epsilon^0_j\|^2 + \sum_{j=0}^{k-1}\frac{\rho^{2(k-j)-1}}{n_j}, \qquad j = 1,2,\ldots,p.\notag
\end{align}
This implies that
\begin{align}
    \mathbb{E} \|\hat{\theta}^k-\theta_c\|^2 \leq \rho^{2k} \mathbb{E}\|\hat{\theta}^0-\theta_c\|^2 + p\sigma^2\sum_{j=0}^{k-1}\frac{\rho^{2(k-j)-1}}{n_j}.\notag
\end{align}     

\end{proof}

\clearpage
\section{Additional Details on CVAE Experiments}\label{app:detail}

\paragraph{Data preprocessing.}
We use MNIST ($28\times 28$ grayscale) and normalize pixel intensities to $[0,1]$. 
Class labels are represented as one-hot vectors $y\in\{0,1\}^{K}$ ($K{=}10$). 

\paragraph{Experiment Details.} 
We use a convolutional CVAE model consisting of an Encoder with two convolutional layers 
(1$\to$32 and 32$\to$64 channels, $4\times4$ kernels, stride 2, with GELU activations), 
followed by a linear projection that outputs the mean and log-variance of a $d_z=20$-dimensional 
Gaussian latent space. 
The Decoder mirrors this structure: a linear layer maps the latent code to a 
$64\times7\times7$ tensor, which is upsampled by two transposed convolutional layers 
(64$\to$32 and 32$\to$1 channels, $4\times4$ kernels, stride 2, with GELU activations) 
to reconstruct $28\times28$ images. 
We train the CVAE with the standard objective, i.e., binary cross-entropy reconstruction loss 
plus KL divergence regularization.

\paragraph{Discriminator for filtering.} 
We additionally train a discriminator $D$ to distinguish real from synthetic samples. 
$D$ is implemented as a multi-layer perceptron: five fully connected layers with hidden sizes 
512, 256, 128, and 64, each followed by a LeakyReLU activation, 
and a final linear layer mapping to a single logit. 
The output is passed through a sigmoid to yield the probability of the input being real. 
The discriminator is trained with binary cross-entropy, labeling real MNIST digits as positive 
and CVAE-generated digits as negative. 

\paragraph{Synthetic generation and filtering.}
After each training round, we generate conditioned samples by drawing 
$z \sim \mathcal{N}(0,I)$, choosing labels $y$ (uniform over classes unless specified), 
and decoding $\tilde{x} = g_{\theta}(z,y)$. 
To control sample quality, we score each $(\tilde{x},y)$ with the discriminator $D(\tilde{x},y)$. 
For each class, we retain only the top $10\%$ of generated samples with the highest discriminator 
scores. These filtered synthetic samples form the 
training data for the next round.

\paragraph{Supplementary Results on ELBO}
\textcolor{black}{
We also evaluate generative performance using the test negative ELBO, a standard likelihood-based loss metric for VAEs. 
To prevent overfitting the discriminator (i.e., the verifier) and ensure stability during the retraining cycles, we incorporate standard regularization techniques, specifically applying a dropout rate of 0.1 and label smoothing with a parameter of 0.05 when training the discriminator.
To investigate the effect of synthetic data size $n_k$, we employ three linearly increasing sample size schedules.
Starting with an initial CVAE trained on only 500 samples, we scale up the retraining size by adding 5K, 30K, or 50K synthetic samples per iteration, respectively.
The models are retrained for 50 iterations until the test negative ELBO stabilizes.}

Figure~\ref{fig:iter_recon_loss} reports the test negative ELBO over these 50 rounds.
Consistent with our bias--variance analysis, we observe a clear improvement (a decrease in loss) in the early stages (up to roughly iterations 10--15).
Furthermore, the trajectories reveal a critical dynamic: while larger synthetic size schedules significantly accelerate this early convergence, all three schedules ultimately plateau and converge to a similar negative ELBO value by iteration 50. 
This observation validates our theoretical framework: drawing more synthetic samples expedites the initial variance reduction phase, but the asymptotic performance limit is dictated by the verifier, not the volume of synthetic data.
After the initial variance-reduction gains, the negative ELBO eventually reverses its trend and increases (deteriorates) as the model inevitably converges toward the verifier's knowledge center.

As discussed in the main text regarding our verifier's limitations, this knowledge center is demonstrably biased.
For reference, across all three size schedules, the final retrained CVAEs at iteration 50 converge to a test negative ELBO of approximately 111. In contrast, a baseline CVAE trained on the entire 60K real image dataset attains a test negative ELBO of 92.12 (lower is better).
Because our verifier emphasizes perceptual quality over likelihood-based reconstruction, the negative ELBO proves harder to improve than FID. 
As a result, even as the negative ELBO stagnates or worsens in later iterations, our retrained models continue to improve FID, achieving sharper, cleaner digits.
We believe that deploying stronger verifiers with, e.g., diversity preservation capabilities could enable iterative retraining to further improve the negative ELBO.

\begin{figure}[H]
    \centering
    \includegraphics[width=0.6\textwidth]{./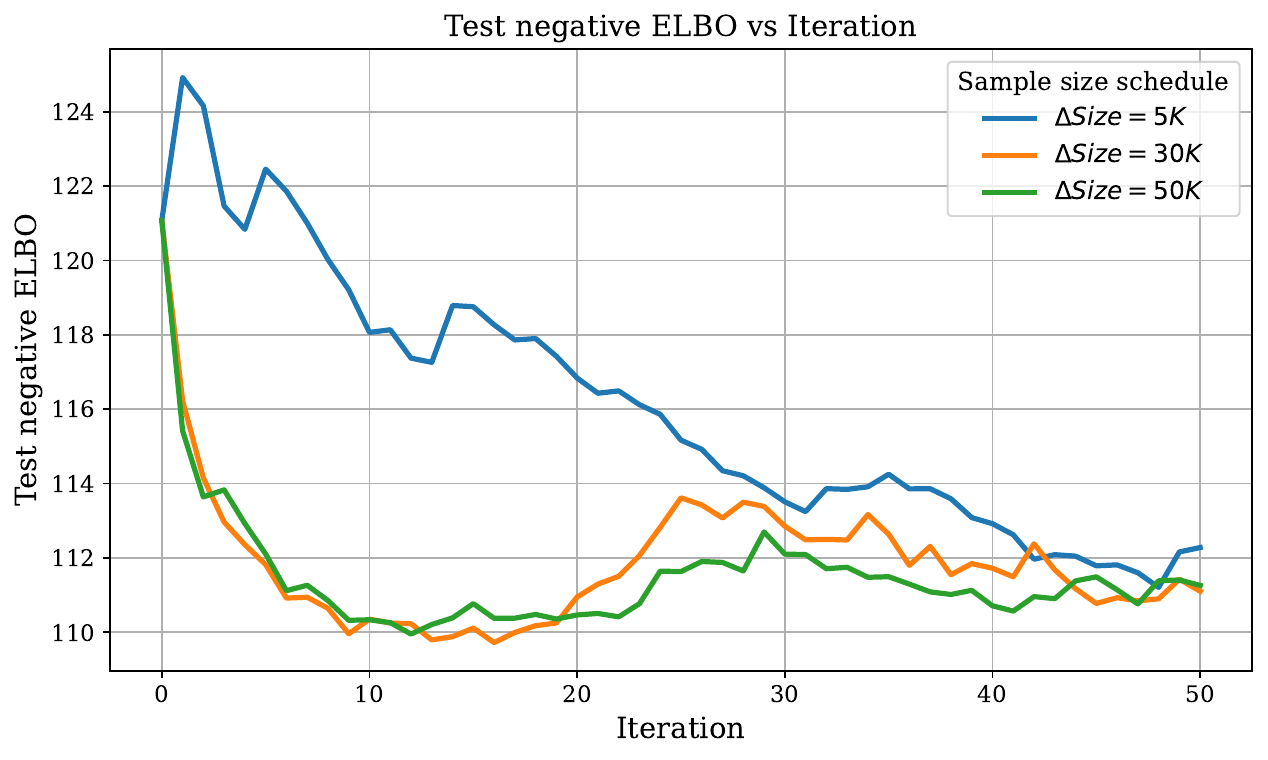}
    \caption{Test negative ELBO across retraining iterations.}
    \label{fig:iter_recon_loss}
\end{figure}

\section{Additional Experimental Results}\label{app:addexp}

\subsection{Random Synthetic Data in Linear Regression}\label{sec:random_synthetic_data}

In the main text, the synthetic covariates were aligned with a fixed orthonormal basis to
simplify analysis and make the retraining dynamics easier to interpret. To show that the
observed behavior is not tied to this structured design, we repeat the same iterative
retraining experiment using fully random synthetic covariates sampled i.i.d.\ from a
standard Gaussian distribution.

Figure~\ref{fig:linear_random} presents the results, corresponding directly to the lower two
panels in Figure~\ref{fig:linear_combined} of the main text, but under the random-design
setting. The qualitative behavior remains the same: with a well-specified verifier,
retraining contracts toward the verifier's knowledge center and avoids collapse, whereas
unfiltered retraining diverges. This confirms that the verifier-induced stability and
improvement patterns hold beyond the orthonormal-design assumption.

\begin{figure}[H]
    \centering
    \begin{subfigure}[b]{0.495\textwidth}
        \centering
        \includegraphics[width=\linewidth]{./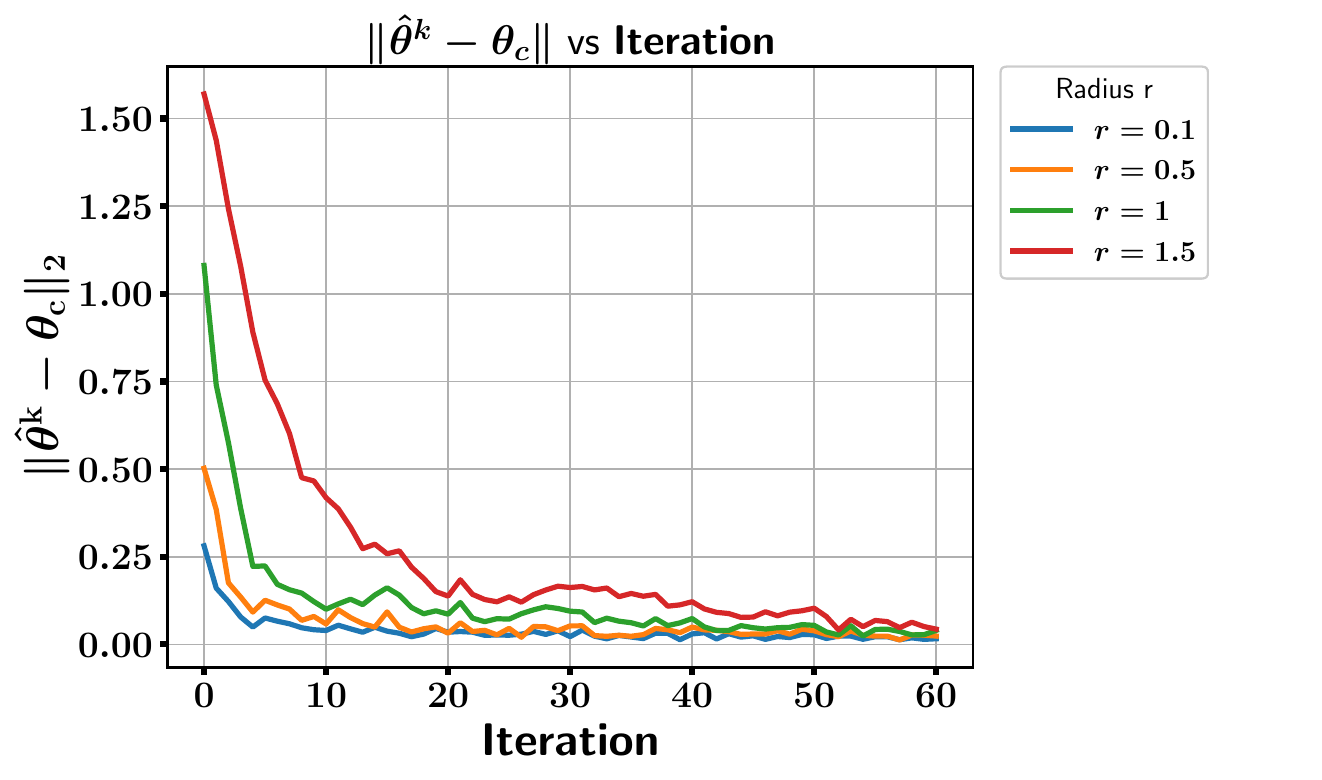}
        \caption{Verifier with bias (random design)}
    \end{subfigure}
    \hfill
    \begin{subfigure}[b]{0.495\textwidth}
        \centering
        \includegraphics[width=\linewidth]{./plot/linear1_random_2.pdf}
        \caption{Verifier without bias (random design)}
    \end{subfigure}
    \caption{{Iterative synthetic retraining under random synthetic covariates, 
    corresponding to the structured-design results in Figure~\ref{fig:linear_combined}.}}
    \label{fig:linear_random}
\end{figure}
\subsection{Different Verifier Shapes}\label{sec:verifier_shape}

We further analyze how different geometric choices of the verifier region affect the
acceptance rule and the resulting retraining dynamics.
For any region $\mathcal{R}_\theta$ around a center $\theta_c$, a synthetic point
$(x,y)$ is accepted whenever there exists a parameter perturbation $\Delta$ in the
region that can explain $y$, i.e.
\[
   y = x^\top (\theta_c + \Delta) + \xi, \qquad \Delta \in \mathcal{R}_\theta.
\]
This leads to the general acceptance requirement
\[
   |y - x^\top \theta_c|
   \le \sup_{\Delta \in \mathcal{R}_\theta} |x^\top \Delta| + \sigma_c.
\]
Different verifier shapes correspond to different support functions 
$\sup_{\Delta \in \mathcal{R}_\theta} |x^\top \Delta|$.

\paragraph{(1) Ellipsoidal verifier.}
Consider the anisotropic ellipsoid
\[
    \mathcal{R}_\theta = \bigl\{
        \theta : (\theta - \theta_c)^\top A(\theta - \theta_c) \le r^2
    \bigr\}, \qquad A \succ 0.
\]
Let $\Delta = \theta - \theta_c$.  
Changing variables $\Delta = A^{-1/2}u$ with $\|u\|_2 \le r$ yields
\[
    \sup_{\Delta^\top A \Delta \le r^2} |x^\top \Delta|
    = r\|A^{-1/2}x\|_2
    = r\sqrt{x^\top A^{-1}x}.
\]
Thus the acceptance condition becomes
\[
    |y - x^\top \theta_c|
    \le r\sqrt{x^\top A^{-1}x} + \sigma_c.
\]

\paragraph{(2) Polyhedral $\ell_1$ verifier.}
For the $\ell_1$ knowledge region
\[
    \mathcal{R}_\theta = \{\|\theta - \theta_c\|_1 \le r\},
\]
the perturbation satisfies $\|\Delta\|_1 \le r$.  
Using Hölder duality,
\[
    \sup_{\|\Delta\|_1 \le r} |x^\top \Delta|
    = r\|x\|_\infty.
\]
The corresponding acceptance rule is
\[
    |y - x^\top \theta_c|
    \le r\|x\|_\infty + \sigma_c.
\]

Although ellipsoidal and $\ell_1$ (polyhedral) regions induce different forms of 
acceptance sets, both yield the same qualitative retraining behavior:
\emph{$\hat{\theta}^{(k)}$ consistently move toward the verifier center $\theta_c$}. The empirical trajectories under both shapes are shown in Figure~\ref{fig:verifier-shapes}.

\begin{figure}[H]
    \centering

    \begin{subfigure}[b]{0.495\textwidth}
        \centering
        \includegraphics[width=\linewidth]{./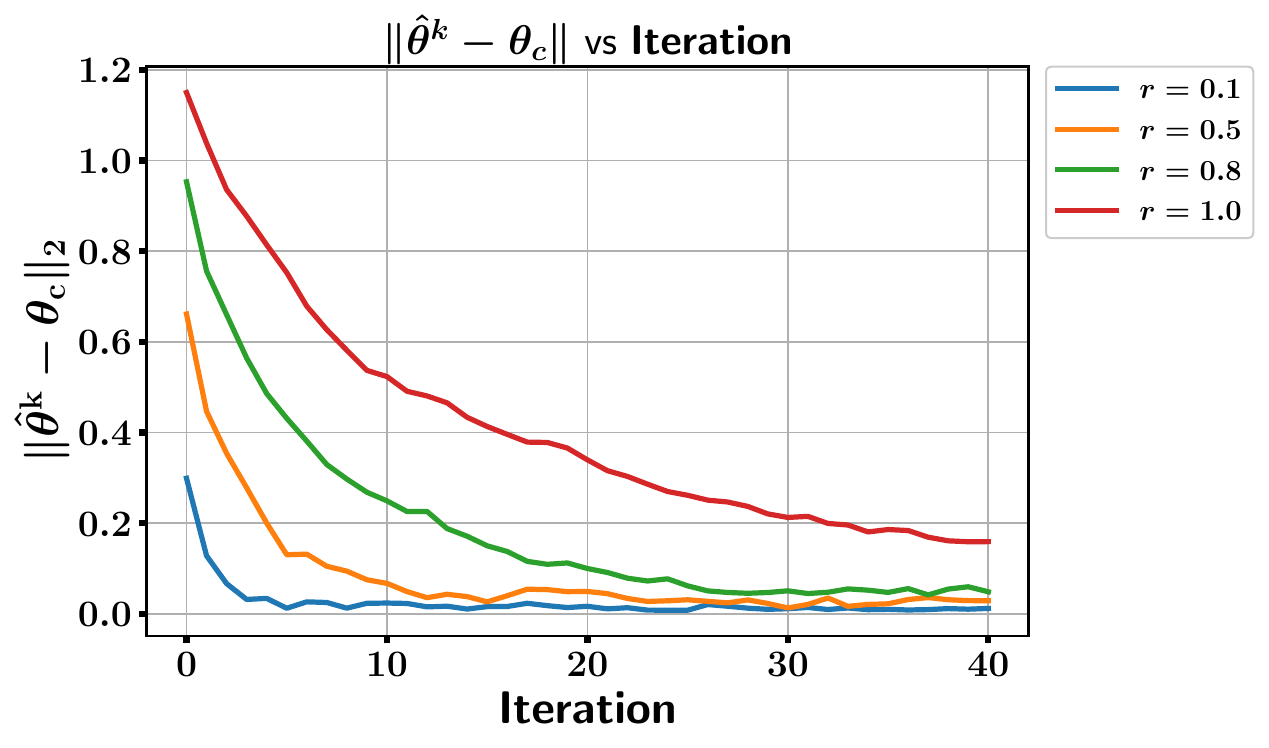}
        \caption{Ellipsoidal verifier}
    \end{subfigure}
    \hfill
    \begin{subfigure}[b]{0.495\textwidth}
        \centering
        \includegraphics[width=\linewidth]{./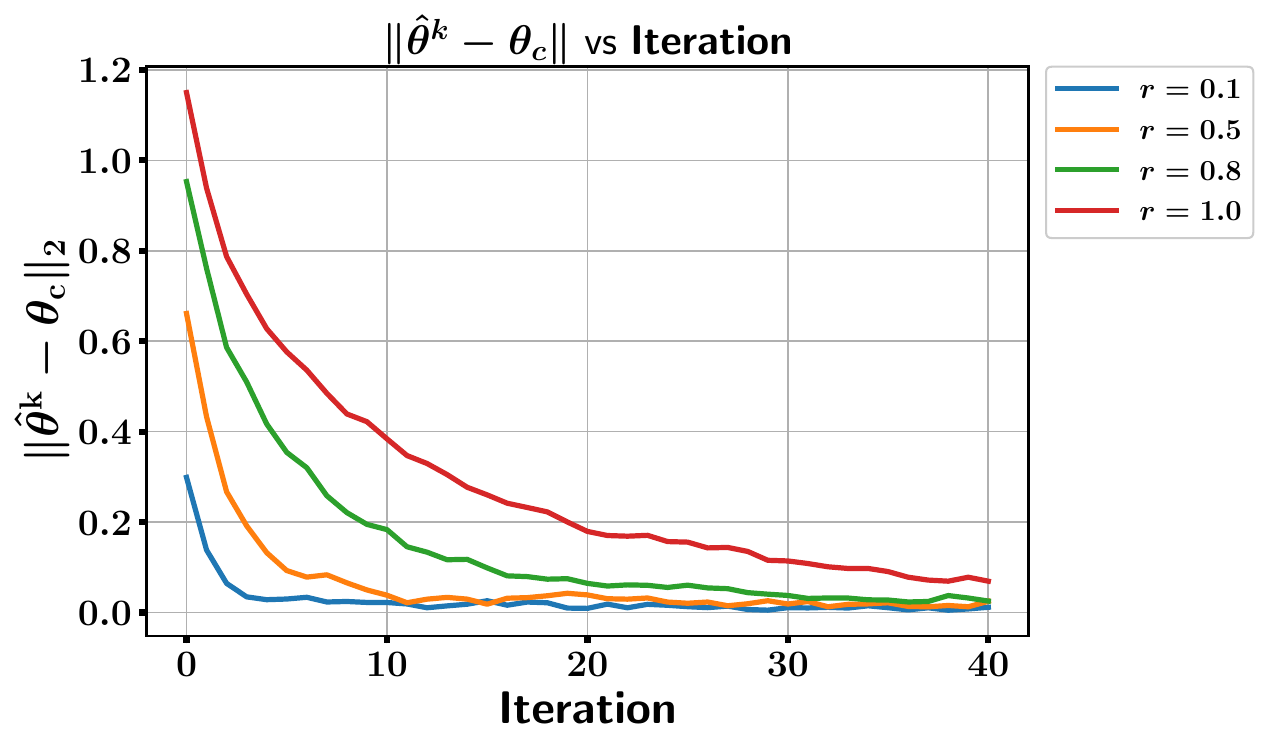}
        \caption{Polyhedral $\ell_1$ verifier}
    \end{subfigure}

    \caption{Retraining trajectories under two different verifier shapes.  
    In both cases, $\hat{\theta}^{(k)}$ empirically converges toward the 
    verifier center $\theta_c$.}
    \label{fig:verifier-shapes}
\end{figure}

\subsection{MNIST-Specific FID Evaluation}
\label{sec:mnist-fid}
The standard Fréchet Inception Distance (FID) is widely used in generative modeling, including on MNIST, following prior work such as \citet{daidiagnosing,leontev2020quality,chan2024evaluating}.  Nonetheless, we agree that Inception embeddings are not tailored to handwritten digits and may not fully capture perceptual similarity on MNIST.

To address this point, we introduce a \textbf{MNIST-specific FID} variant.  
We train a lightweight convolutional network directly on MNIST classification, and compute FID using the penultimate-layer activations as the embedding space.  
This produces a domain-appropriate FID measure while preserving the same statistical structure as the original metric. These results confirm that our conclusions are robust to the choice of embedding and do not depend on the use of vanilla FID.

\vspace{0.5em}
\noindent\textbf{Results.}
Figures~\ref{fig:mnist-fid-v2-full} and \ref{fig:mnist-fid-v2-zoom} report the new FID scores under our retraining framework for all verifier sizes.  
Consistent with the standard FID curves in the main paper.
\begin{figure}[H]
    \centering
    \begin{subfigure}[t]{0.495\textwidth}
        \centering
        \includegraphics[width=\linewidth]{./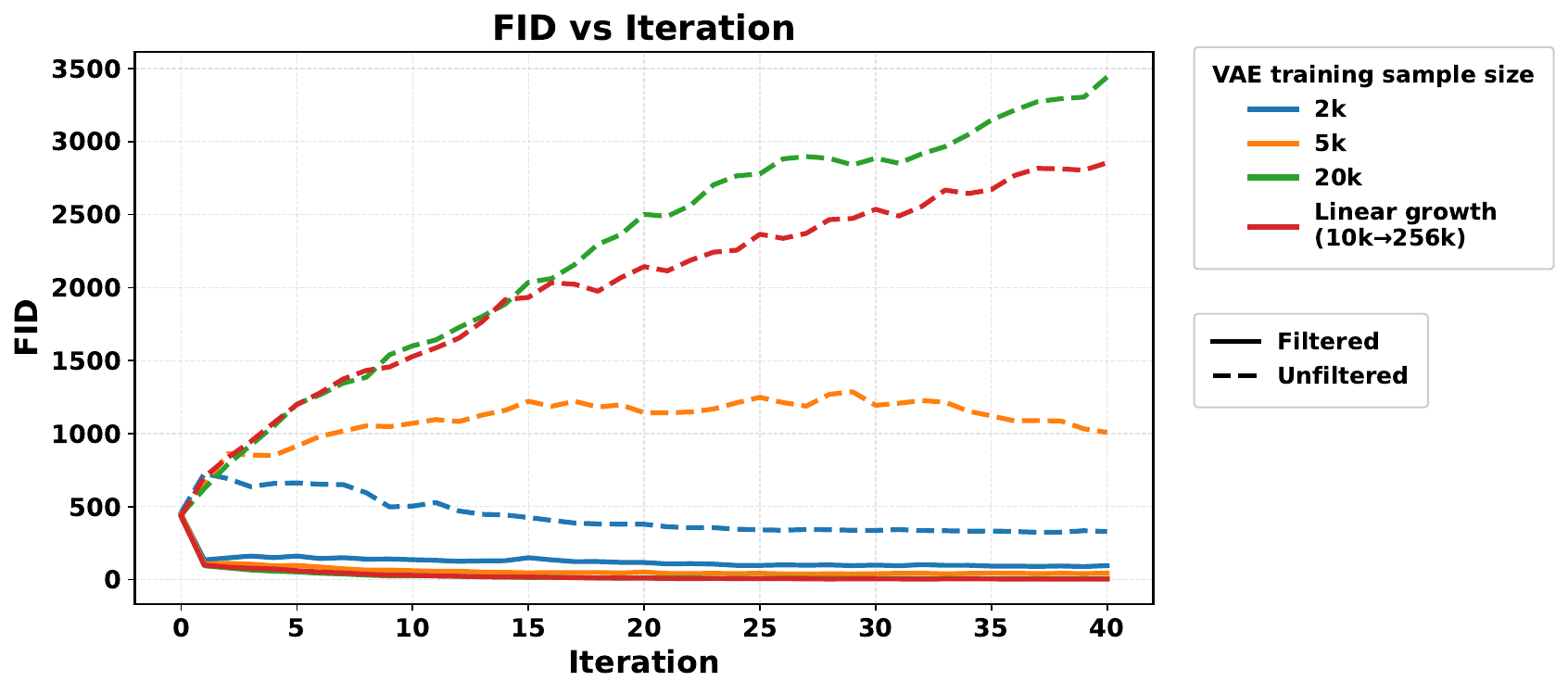}
        \caption{\textbf{MNIST-specific FID over retraining iterations.}}
        \label{fig:mnist-fid-v2-full}
    \end{subfigure}
    \hfill
    \begin{subfigure}[t]{0.495\textwidth}
        \centering
        \includegraphics[width=\linewidth]{./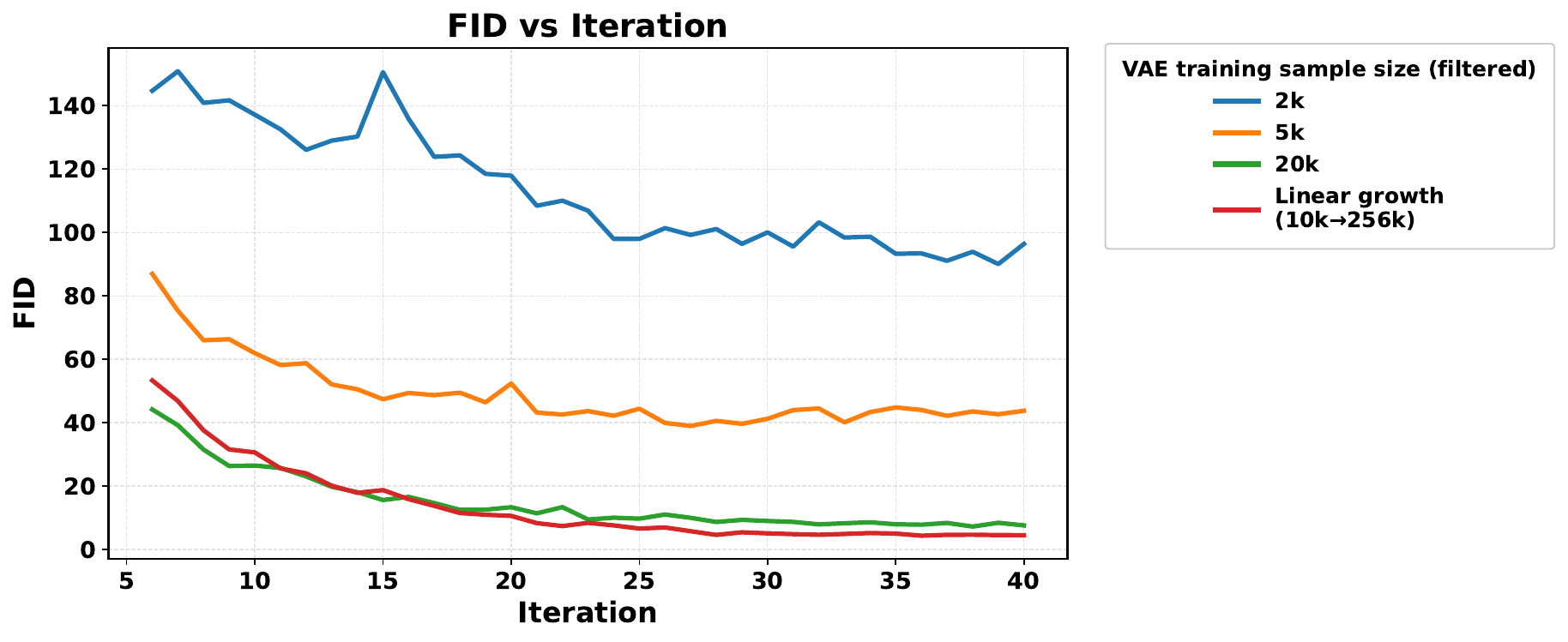}
        \caption{\textbf{Zoom-in view for filtered runs.}}
        \label{fig:mnist-fid-v2-zoom}
    \end{subfigure}
    \caption{\textbf{MNIST-specific FID using our MNIST-trained feature embedding.}  
    Both results confirm that our conclusions remain unchanged when replacing standard FID with a domain-specific metric.}
    \label{fig:mnist-fid-v2-both}
\end{figure}

\subsection{Different Initial Sample Sizes}\label{sec:initial_sample_size}

We assess the robustness of verifier-guided retraining by varying the number of real
MNIST images used to train the initial CVAE ($1k, 2k, 3k, 4k, 60k$). For small and
medium initial sample sizes, verifier filtering yields clear early FID improvements and then stabilizes performance, whereas unfiltered retraining quickly degrades. When the
generator is initialized on all 60k real images, verifier filtering no longer improves FID over the initial model, but it still effectively prevents the severe collapse observed under
unfiltered retraining.

We perform our main experiments in a low–real-data regime (e.g., with 500 initial images), where the verifier, having been trained on a much larger subset of MNIST, holds strictly more external knowledge than the generator. According to our theory, this is exactly the regime in which verifier-guided retraining should provide true improvement rather than simple stabilization, because the verifier contributes additional information. In contrast, when the generator is initialized on the full 60k training images,
the verifier would need access to an even stronger source of external information to
achieve improvement; otherwise it can only prevent collapse. For this reason, the small
initial sample size serves as the most informative regime for highlighting the verifier’s
knowledge-injection effect and demonstrating the improvement phenomena predicted
by our theoretical framework.

\begin{figure}[H]
    \centering
    \includegraphics[width=0.55\linewidth]{./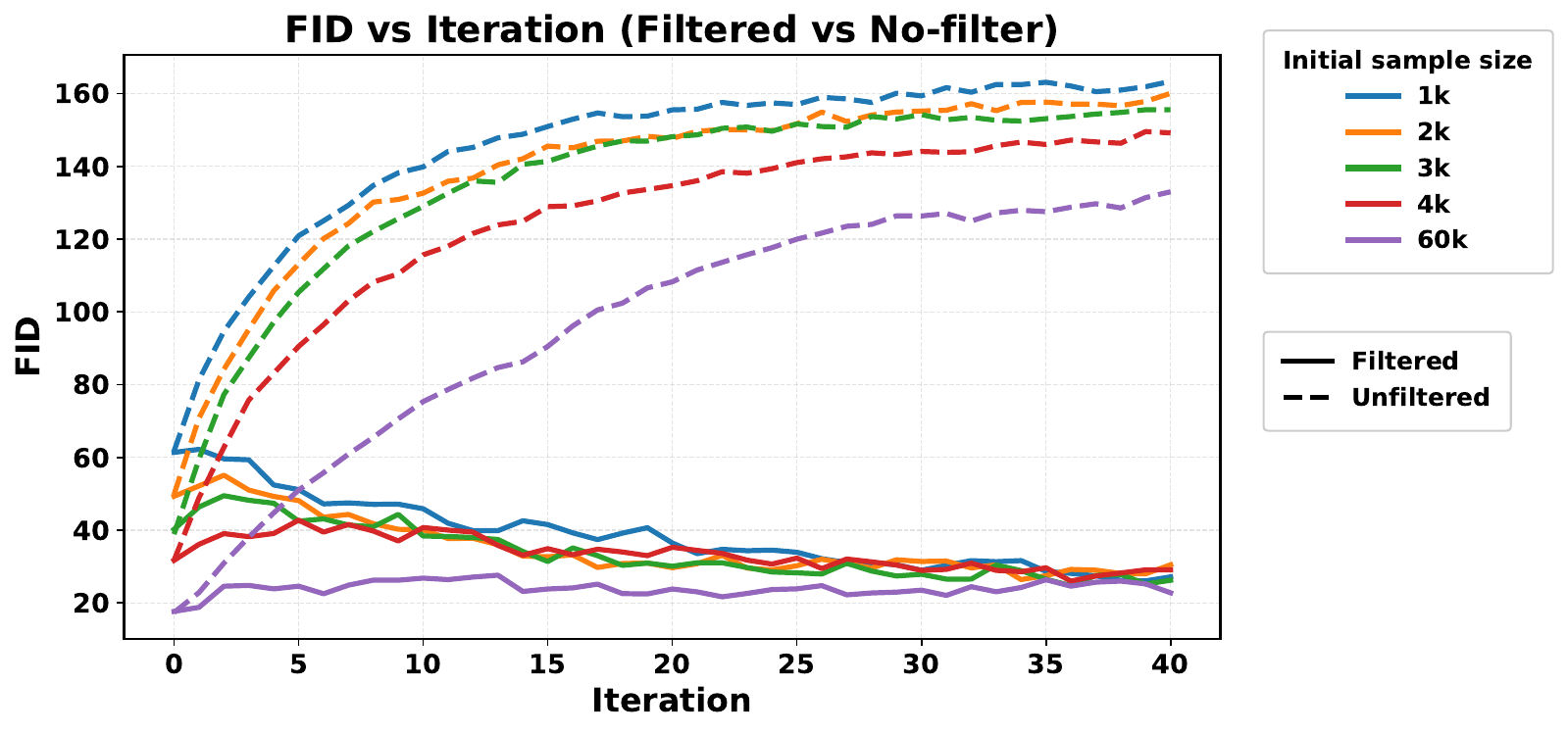}
    \caption{FID across retraining iterations under different initial sample sizes, 
    comparing verifier-filtered and unfiltered synthetic retraining.}
    \label{fig:placeholder}
\end{figure}

\end{document}